\theoremstyle{plain}
\newtheorem{theorem}{Theorem}[section]
\theoremstyle{definition}
\theoremstyle{remark}
\renewcommand{\hat}{\widehat}
\newcommand{\Itrain}{\mathcal I_{\textrm{train}}}
\newcommand{\Itest}{\mathcal I_{\textrm{test}}}
\newcommand{\Ical}{\mathcal I_{\textrm{cal}}}
\title{\fontsize{14pt}{14pt}\selectfont Fast Conformal Prediction using Conditional Interquantile Intervals}
\author{
    Naixin Guo\textsuperscript{\rm 1}{$^*$}, Rui Luo\textsuperscript{\rm 2}\thanks{Corresponding author}, Zhixin Zhou \textsuperscript{\rm 3}{$^*$}\\
}
\begin{document}

\maketitle

\begin{abstract}
We introduce Conformal Interquantile Regression (CIR), a conformal regression method that efficiently constructs near-minimal prediction intervals with guaranteed coverage. CIR leverages black-box machine learning models to estimate outcome distributions through interquantile ranges, transforming these estimates into compact prediction intervals while achieving approximate conditional coverage. We further propose CIR+ (Conditional Interquantile Regression with More Comparison), which enhances CIR by incorporating a width-based selection rule for interquantile intervals. This refinement yields narrower prediction intervals while maintaining comparable coverage, though at the cost of slightly increased computational time.
Both methods address key limitations of existing distributional conformal prediction approaches: they handle skewed distributions more effectively than Conformalized Quantile Regression, and they achieve substantially higher computational efficiency than Conformal Histogram Regression by eliminating the need for histogram construction. Extensive experiments on synthetic and real-world datasets demonstrate that our methods optimally balance predictive accuracy and computational efficiency compared to existing approaches.
\end{abstract}

\begin{links}
    \link{Code}{https://github.com/orince/CIR}
\end{links}

\section{Introduction}
\label{sec:introduction}
Conformal prediction is a powerful framework for constructing prediction intervals with finite-sample validity guarantees. By leveraging data exchangeability, conformal methods transform outputs from arbitrary machine learning algorithms into set-valued predictions that achieve the desired coverage level without imposing distributional assumptions on the underlying data.

Existing conformal regression methods primarily fall into two categories: directly predicting interval endpoints \cite{romano2019conformalized, kivaranovic2020adaptive,sesia2020comparison,gupta2022nested} or inverting estimated full conditional distributions \cite{izbicki2020flexible,chernozhukov2021distributional}. While effective in many cases, these approaches may produce suboptimal intervals if the data is skewed. Conformalized quantile regression (CQR) \cite{romano2019conformalized} may yield unbalanced intervals when the conditional distribution is skewed, whereas density-based methods can adapt to skewness but involve complex tuning and interpretation.
The conformal histogram regression (CHR) method \cite{sesia2021conformal} approximates the conditional distribution of $Y|X$ using histograms and seeks the shortest intervals with the desired coverage. It extracts more information from the estimated conditional distribution compared to other methods, such as CQR \cite{romano2019conformalized}. A key advantage of CHR is its ability to automatically adapt to the potential skewness of the data distribution, which sets it apart from methods that tend to produce symmetric intervals with fixed lower and upper miscoverage rates and may be suboptimal when dealing with data of unknown skewness \cite{romano2019conformalized, izbicki20a, chernozhukov2021distributional}.

However, CHR faces a potential limitation in terms of computational efficiency, particularly due to the process of discretizing the response space into bins, constructing histograms, and searching through them to locate interval boundaries. To overcome these inefficiencies, we propose Conformal Interquantile Regression (CIR), a novel method that directly uses the structure of conditional quantiles to determine intervals. CIR eliminates explicit response space binning by directly estimating the conditional probability of a new response falling into each interquantile interval, using a multi-output quantile regression model with equiprobable quantiles. This design enables CIR to adapt to data skewness while providing enhanced computational efficiency.

For each calibration sample, CIR begins with the narrowest interquantile interval and iteratively incorporates the next-shortest adjacent interval until the observed response $Y$ is covered. The number of intervals needed defines the conformity score. Intuitively, samples lying in long interquantile intervals receive larger scores, whereas those lying in shorter intervals receive smaller ones. Following the thresholding principle in \cite{luo2025threshold,romano2020classification}, we select a data-dependent cutoff on these scores so that the proportion of calibration responses contained in the corresponding merged intervals attains the target coverage level. This cutoff is then applied to test inputs to form the final prediction intervals.

To optimize the prediction interval further, we also propose Conditional Interquantile Regression with More Comparison (CIR+) as an extension of CIR. CIR+ incorporates an additional decision mechanism that evaluates whether to retain or discard specific interquantile intervals based on their length. Specifically, if the length of the interquantile interval containing $Y$ falls below a threshold determined during calibration, it is discarded. This step yields slightly narrower prediction set widths while maintaining comparable coverage performance.
Our analysis shows that prediction intervals obtained by thresholding the number of interquantile intervals in CIR and CIR+ guarantee marginal coverage.
Furthermore, these methods can achieve the desired conditional coverage and minimize expected prediction interval length asymptotically.
Experimental results on both synthetic and real-world datasets showcase the superiority of our proposed methods. CIR and CIR+ exhibit performance comparable to the state-of-the-art CHR method while significantly reducing computational demands. This combination of statistical efficiency and computational economy positions our approaches as powerful alternatives in the field of conditional prediction intervals.

The rest of this paper is structured as follows. Section~\ref{sec:related_work} discuss related work which provides context for our contributions. Section~\ref{sec:method} introduces the proposed CIR and CIR+ methods in detail. Following this, we present a brief theoretical analysis of our methods in Section~\ref{sec:theory}. Section~\ref{sec:experiment} presents numerical experiments comparing our methods with existing conformal regression techniques on simulated and real datasets.
Finally, we conclude the paper in Section~\ref{sec:conclusion}.

\section{Related Work}\label{sec:related_work}

Conformal Prediction (CP) has been successfully applied to both classification~\cite{luo2024trustworthy,luo2024entropy,luo2024weighted,wang2025trustworthy,zhangresidual} and regression tasks~\cite{luo2025threshold,luo2025volume,bao2025review}, demonstrating its flexibility across diverse real-world scenarios such as segmentation~\cite{luo2025conditional}, games~\cite{luo2024game, bao2025enhancing}, time-series forecasting~\cite{su2024adaptive}, and graph-based applications~\cite{luo2023anomalous, tang2025enhanced, luo2025conformalized, wang2025enhancing, luo2025conformal}.

Our work focuses on conformal regression and aims to produce compact, interpretable prediction intervals with guaranteed coverage. While \cite{izbicki2022cd} and \cite{luo2025threshold} (CTI) combine profile-distance–based similarity with multi-output quantile regression to approximate conditional densities, we instead prioritize interval-based uncertainty quantification, which—as emphasized by \cite{sesia2021conformal}—offers clearer interpretability and avoids irregular, overconfident regions. CTI employs a global length threshold on interquantile segments, often yielding highly fragmented prediction sets, whereas our CIR and CIR+ procedures maintain finite-sample marginal and conditional coverage while striving to produce a single contiguous interval whenever possible. Because CTI explicitly trades coverage for narrower sets, its behavior remains fundamentally distinct from ours even under unimodal conditional distributions and perfectly estimated quantiles.
\cite{gao2025volume} construct unconditional label–space level sets that do not depend on the conditional distribution, targeting density-based highest posterior density set rather than regression intervals.
However, their procedure requires repeatedly evaluating empirical coverage over all calibration-sample–induced breakpoints and solving a discrete global optimization over these candidate sets, which makes the method  computational expensive. 

\section{Proposed Method}
\label{sec:method}
\subsection{Problem Setup}
Consider a general regression problem with a dataset $\{(x_i, y_i)\}_{i=1}^n$, where $x_i \in \mathcal{X} \subseteq \mathbb{R}^d$ is the input feature vector and $y_i \in \mathcal{Y} \subseteq \mathbb{R}$ is the corresponding continuous response variable.  The dataset is split into three parts: a training set $\mathcal{D}_{\text{train}}$, a calibration set $\mathcal{D}_{\text{cal}}$, and a test set $\mathcal{D}_{\text{test}}$. The corresponding indices set are denoted by $\Itrain, \Ical$ and $\Itest$ respectively.  We assume that the examples in these sets are exchangeable.
Our goal is to find a prediction interval $C(X) \subseteq \mathcal{Y}$ for each test input $X$ such that the true response value $Y$ is included in $C(X)$ with a probability of at least $1-\alpha$, where $\alpha \in (0, 1)$ is a target significance level. Specifically, $C(X)$ should guarantee the finite-sample marginal coverage among all the samples in $\mathcal{D}_{\text{cal}}$ and $\mathcal{D}_{\text{test}}$:
\begin{equation*}
\mathbb{P}[Y \in C(X)] \geq 1-\alpha
\end{equation*}
for joint distribution for $X$ and $Y$. 

While achieving marginal coverage, we also aim for $C(X)$ to approximately achieve conditional coverage at level $1-\alpha$, where $\alpha \in (0, 1)$:
\begin{equation*}
\mathbb{P}[Y \in C(x)| X = x] \geq 1-\alpha
\end{equation*}
meaning that in practice the procedure should approximate this objective and, under appropriate conditions, achieve it asymptotically as the sample size tends to infinity.
Lastly, the prediction intervals are expected to be as narrow as possible. 

Imagine an {oracle} with access to $P_{Y \mid X}$, the distribution of $Y$ conditional on $X$, which leverages such information to construct optimal prediction intervals as follows. For simplicity, suppose $P_{Y \mid X}$ has a continuous density $f(y \mid x)$ with respect to the Lebesgue measure, although this could be relaxed with more involved notation. Then, the oracle interval for $Y \mid X=x$  would be:
\begin{align} \label{eq:oracle-interval}
  C_{\mathrm{oracle}}^{\alpha}(x) & = \left[ \ell_{1-\alpha}(x), \upsilon_{1-\alpha}(x) \right],
\end{align}
where, for any $\tau \in (0,1]$, $\ell_{\tau}(x)$ and $\upsilon_{\tau}(x)$ are defined as:
\begin{align} \label{eq:oracle-interval-int}
 \mathop{\mathrm{arg\,min}}_{(\ell, \upsilon) \in \mathbb{R}^2: \upsilon \geq \ell}  \left\{ |\upsilon-\ell| : \int_{\ell}^{\upsilon} f(y \mid x) dy \geq \tau \right\}.
\end{align}
This is the shortest interval with conditional coverage. Recall that CHR \cite{sesia2021conformal} replaces $f$ in~Equation~\eqref{eq:oracle-interval-int} with a histogram approximation, which is computation expensive. Specifically, if we partition the domain of $Y$ into $T$ equal parts  based on quantiles and obtain the corresponding quantiles $q_0(x), \ldots, q_T(x)$, where $q_t(x)$ is the $t/T$-quantile of the conditional distribution of $Y|X=x$, the interval derived from following expression approximates the one from Equation\eqref{eq:oracle-interval-int} when $T$ is sufficiently large:
\begin{equation} \label{eq:asym-oracle-interval}
 \begin{aligned}
    &\mathop{\mathrm{arg\,min}}_{
    \substack{l = 0, \dots,T-1 \\ l < u \leq T}
    } 
    \Bigl\{ |q_{u}(x) - q_{l}(x)|:\\&\quad \quad \quad\quad \quad
    \sum_{t=l}^{u-1} \mathbb{P}\bigl[Y \in (q_{t}(x),q_{t+1}(x)]\bigr]\geq \tau 
    \Bigr\}.
\end{aligned}   
\end{equation}

The corresponding interval of it would be:
\begin{align} \label{eq:asym-oracle-interval-int}
  C^{\alpha}(x) & = \left(q_{l_{\tau}}(x), \ q_{u_{\tau}}(x) \right].
\end{align}
This optimization over consecutive quantile-based bins motivates our interquantile-interval approach, which avoids explicit response-space binning.
\begin{algorithm}[tb]
\caption{Conditional Interquantile Regression (CIR)}
\label{alg:CIR}
\textbf{Input}:  Labeled data $\{(x_i, y_i)\}_{i\in\mathcal{I}}$, a data split ratio, unlabeled test data $\{x_i\}_{i\in\mathcal{I}_\text{test}}$, black-box learning algorithm $\mathcal{B}$, No. of interquantile intervals $T$, level $\alpha \in (0,1)$.
\begin{algorithmic}[1] 
\STATE  Randomly split the indices
  $\mathcal{I}$ into $\mathcal{I}_\text{train}$ and $\mathcal{I}_\text{cal}$.
\STATE  Train $\mathcal{B}$ on samples in $\mathcal{I}_\text{train}$, and obtain quantile estimation functions $\hat{q}_t$ for $t = 0,1,\ldots,T$.
\STATE For every $i \in \mathcal{I}_\text{cal} \cup \mathcal{I}_\text{test}$,
    evaluate $\hat{q}_t(x_i)$ for $t = 0,1,\ldots,T$.
\STATE For any given $k$ and  $i \in \mathcal{I}_\text{cal}$, define 
      $l_k$ as the lower endpoint of the shortest interval that encompasses $k$ consecutive interquantile intervals
    and the corresonding interval is defined as $C^\alpha_k(x_i) = (\hat{q}_{l_k}(x_i), \hat{q}_{l_k + k}(x_i)]$.  
\STATE  For $i \in \mathcal{I}_\text{cal}$, let $k_i$ be the smallest $k$ such that $y_i \in C^\alpha_k(x_i)$ (equivalent to $s(x_i, y_i)$ in Eq.\ref{eq:conformity-score-CIR}). 
 \STATE Compute $\hat{k}$ as the $r_{\alpha}$ smallest $k_i$, where $r_{\alpha} =\lceil (1 - \alpha)(1 +|\mathcal{I}_\text{cal}|) \rceil$ .
\end{algorithmic}
\textbf{Output}:  A prediction interval $C^\alpha_{\hat{k}}(x_i)$.
\end{algorithm}

\subsection{Conditional Interquantile Regression}

Our method directly selects consecutive interquantile intervals starting from the smallest ones, hence the name {\em conformal interquantile regression (CIR)}. Specifically,  we apply quantile regression on the training set $\mathcal{D}_{\text{train}}$ to predict the $t/T$-th quantile of the conditional distribution $Y|X=x$ for every $x\in \mathcal X$, where $t$ takes values from 0 to $T$ in increments of $1$. The estimated quantile for $t/T$ is denoted by $
\hat q_t(x)$ and
 the interquantile intervals are then defined as: 
\begin{align}\label{eq:interquantile:estimation}
    I_t(x) = (\hat q_{t-1}(x), \ \hat q_{t}(x) ]\quad\text{for}\quad t= \ 1,\dots, T.
\end{align}
Then Equation \ref{eq:asym-oracle-interval} can be approximately written as:
\begin{align} \label{eq:cir-intro}
 \mathop{\mathrm{arg\,min}}_
 {\substack{l = 0, \dots,T-1,\\ 1 \leq k \leq T-l-1}}  \left\{ \sum_{t = l}^{l+k}|I_{t+1}(x)|: \sum_{t=l}^{l+k} \mathbb{P}[Y \in I_{t+1}(x)]\geq \tau \right\},
\end{align}
where $l$ denotes the starting index of the interval and $k$ represents the number of consecutive interquantile intervals to be included, both determined by $\tau$. And the corresponding interval can be defined as 
\begin{align*}
  C^{\alpha}_{k(\tau)}(x) & = \left( \hat{q}_{l(\tau)} (x), \ \hat{q}_{l(\tau)+k(\tau)} (x) \right],
\end{align*}
Specifically, when $P_{Y \mid X}$ is unimodal, our estimated interval can be derived from
\begin{align} \label{eq:conformal-interval}
  C^{\alpha}_{\hat{k}}(x) & = \left( \hat{q}_{l_{\hat{k}}} (x), \ \hat{q}_{l_{\hat{k}}+\hat{k}} (x) \right],
\end{align}
where
\begin{equation}
\begin{aligned} \label{eq:k-l}
  \hat{k} \coloneqq  k(\hat{\tau});\ 
   l_{\hat{k}}  \coloneqq  l(\hat{\tau})  = \mathop{\mathrm{arg\,min}}_
 {l = 0, \dots,T-1}  \Big\{ \sum_{t = l}^{l+\hat{k}}|I_{t+1}(x)| \Big\}.
\end{aligned}
\end{equation}
Notice that $l_{\hat{k}}$ represents the lower endpoint index, with the subscript $\hat{k}$ reflecting the unimodal property that the number of included interquantile intervals uniquely determines both the narrowest interval and its corresponding lower endpoint. $\hat{\tau}$ will be determined by suitable conformity scores evaluated on the hold-out data, and it usually larger than  the true target coverage  $\tau$ if the model for $f$ is not very accurate to ensure this condition is met. However, if the quantiles are accurately estimated, ensuring that each interval has approximately the same probability, $1/T$, of covering the true label $Y$, and if the number of interquantile intervals approaches infinity, Equation~\eqref{eq:conformal-interval} will closely resemble that of the oracle in Equation~\eqref{eq:oracle-interval}.

The key quantity determining our prediction interval is $\hat{k}$, the number of interquantile intervals to include.
To determine $\hat{k}$, we first introduce a conformity score function:
\begin{align} \label{eq:conformity-score-CIR}
s(x,y) = \min\Bigl\{k \in {1,\dots,T}: y \in C^{\alpha}_k(x)\Bigr\}.
\end{align}
This function computes the minimum number of interquantile intervals needed to contain $y$.
The implementation follows an iterative process. Starting with the shortest interquantile interval, for each $i \in \mathcal{I}_\text{cal}$, if $y_i$ falls outside this interval, we expand our search to adjacent interquantile intervals (on either side, or one side at boundaries). We select the narrowest adjacent interval and combine it with our original interval. This process continues until $y_i$ falls within the expanded interval. The total number of interquantile intervals in this final interval defines $k_i$.

For computational efficiency, we implement the conformity score as $ s(x_i,y_i) = k_i$ for $i \in \mathcal{I}_\text{cal}$. We then define $r_{\alpha} =\lceil (1 - \alpha)(1 +|\mathcal{I}_\text{cal}|) \rceil$ and select
\begin{align} \label{eq:khat}
\hat{k} \leftarrow \text{the} \ \ r_{\alpha}\text{-th} \ \ \text{smallest} \ \ k_i,
\end{align}
ensuring $y_i \in C^\alpha_{\hat{k}}(x_i)$ holds for at least $r_{\alpha}$ instances in the calibration set. Finally, we use this $\hat{k}$ in Equation~\eqref{eq:conformal-interval} to obtain prediction sets for all $x_i, i \in \mathcal{I}_\text{test}$. The  procedure is detailed in Algorithm \ref{alg:CIR}.

\subsection{Conditional Interquantile Regression with More Comparison (CIR+)}

\begin{algorithm}[tb]
\caption{CIR with More Comparison (CIR+)}
\label{alg:algorithm2}
\textbf{Input:} Labeled data $\{(x_i, y_i)\}_{i\in\mathcal{I}}$, unlabeled test data $\{x_i\}_{i\in\mathcal{I}_\text{test}}$, a data split ratio, black-box learning algorithm $\mathcal{B}$,  No. of interquantile intervals $T$.
\begin{algorithmic}[1]
\STATE Randomly split the indices $\mathcal{I}$ into $\mathcal{I}_\text{train}$ and $\mathcal{I}_\text{cal}$
\STATE Train $\mathcal{B}$ on samples in $\mathcal{I}_\text{train}$, and obtain quantile estimation functions $\hat{q}_t$ for $t = 0,1,\ldots,T$.
\STATE For every $i \in \mathcal{I}_\text{cal} \cup \mathcal{I}_\text{test}$,
    evaluate $\hat{q}_t(x_i)$ for $t = 0,1,\ldots,T$.
\STATE  For any given $k$ and  $i \in \mathcal{I}_\text{cal}$, define 
      $l_k$ as the lower endpoint of the shortest interval that encompasses $k$ consecutive interquantile intervals
    and the corresonding interval is defined as $C^\alpha_k(x_i) = (\hat{q}_{l_k}(x_i), \hat{q}_{l_k + k}(x_i)]$.    
\STATE Define $s(x_i, y_i) =  k_i - 1 + e_{k_i}(x_i)$, where $k_i$ is the smallest $k$ such that $y_i \in C^\alpha_k(x_i)$ for  $i\in \mathcal{I}_\text{cal}$ and $e_{k_i}(x_i) \in (0,1)$ is a rescaled length of the $k_i$-th interquantile interval of $x_i$ as defined in Equation~\eqref{eq:e}.  
\STATE  Define $r_{\alpha} =\lceil (1 - \alpha)(1 +|\mathcal{I}_\text{cal}|) \rceil$ and then the threshold $\hat{s}$ is defined as the  $r_{\alpha}$th-smallest value of $s(x_i, y_i)$ for $i\in \mathcal{I}_\text{cal}$.
\STATE For $i \in \mathcal{I}_\text{test}$, $\hat{k}_i = \lfloor \hat{s} \rfloor +
\mathbf{1}\{e_{\lfloor \hat{s})\rfloor}(x_i) > \hat{s} - \lfloor \hat{s} \rfloor \}$.
\end{algorithmic}
\textbf{Output:} $C^\alpha_{{\hat{k}_i}}(x_i)$ for $y_i$, where $i \in \mathcal{I}_\text{test}$.
\end{algorithm}

In CIR, the conformity score is defined as the minimum number of interquantile intervals required to contain the response $y$. We then select the $r_{\alpha}$-th smallest number in the calibration set as the threshold for the minimum number of interquantile intervals in the prediction interval. However, when multiple intervals in the calibration set share the same width at the $r_{\alpha}$-th position, using only the interval count may reduce accuracy.
For example, suppose 5 samples in the CIR calibration process share rank $r_{\alpha}$, the selected $\hat{k}$ would span ranks $r_{\alpha}$ to $r_{\alpha}+4$.
To obtain a more precise ranking, we incorporate additional information and modify CIR into Conditional Interquantile Regression with More Comparison (CIR+). The conformity score is defined as:
\begin{equation}
    \label{eq:conformity-score-CIRFA}
\begin{aligned}
 s(x, y) =  \min \Bigl\{k - 1 + e_{k}(x): y \in C^{\alpha}_k(x)\Bigr\},
  \end{aligned}
\end{equation}
where 
\begin{equation}
\begin{aligned} \label{eq:e}
&e_k(x) = \frac{1}{c} \min \{\hat{q}_{l_{k-1} }(x)-\hat{q}_{l_{k-1}-1}(x),\\
& \quad \quad \quad \quad \quad \quad \quad \hat{q}_{l_{k-1} + {k}}(x)-\hat{q}_{l_{k-1} + {k} -1}(x) \},
\end{aligned}
\end{equation}
and $c$ can be any fixed constant which makes $e_k(x) \in (0,1)$.
In other words, when expanding our search from the shortest interquantile interval, we refine the process of incrementing $k$. Rather than simply increasing from $k-1$ to $k$ when we find an interquantile interval containing $y$, we add a rescaled length of the newly included interval—specifically, the adjacent interquantile interval with the smaller size. Typically, since the predicted quantiles for different samples are different, the lengths of the interquantile intervals are often different as well. This allows us to accurately select the $r_{\alpha}$-th smallest from these conformity scores.
In calibration step, we still first find $k_i$ as in CIR. Then, we only need to focus on the lengths of the interquantile intervals containing $y_i$ that correspond to the tied ranks at the $r_{\alpha}$-th smallest position. We then re-rank these tied scores according to Equation~\eqref{eq:conformity-score-CIRFA} and define the threshold as:
\begin{align} \label{eq:shat} 
 \hat{s} \leftarrow \text{the} \ \ r_{\alpha}\text{-th}  \ \  \text{smallest}  \ \ k_i - 1 + e_{k_i}(x_i).
\end{align}

In the prediction phase, we identify $C^\alpha_{\lfloor \hat{s} \rfloor}(x_i)$ for each sample in the test set. Subsequently, we determine whether to include the $\lfloor \hat{s} \rfloor$-th merged interquantile interval based on its length. Specifically, we compare the scaled length to $\hat{s} - \lfloor \hat{s} \rfloor$. If greater, we retain the segment and the prediction set becomes $C^\alpha_{\lceil\hat{s} \rceil}(x_i)$. If less, we discard it, leaving the set as $C^\alpha_{\lfloor \hat{s} \rfloor}(x_i)$. The procedure is shown in Algorithm \ref{alg:algorithm2}.

\begin{figure*}
    \centering
    \begin{subfigure}[t]{0.42\textwidth}
        \centering
        \includegraphics[width=\textwidth]{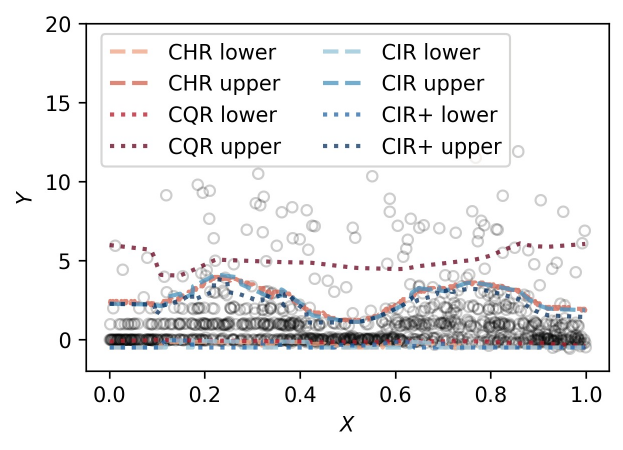}
        \label{fig:hr@1_wr}
    \end{subfigure}
    \hspace{0.001\textwidth}
    \begin{subfigure}[t]{0.46\textwidth}
        \centering
        \includegraphics[width=\textwidth]{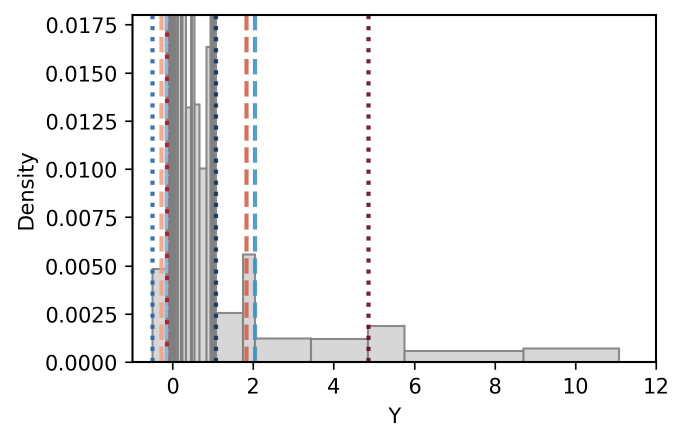}
        \label{fig:interquantile_wr}
    \end{subfigure}
    \caption{Comparison of CIR, CIR+, CQR \citep{romano2019conformalized}, and CHR \citep{sesia2021conformal} in a single-variable example. All methods use the same deep quantile model and guarantee 90\% marginal coverage. Left: Prediction bands as a function of $X$. Empirical marginal and estimated conditional coverage for all methods is 0.9 (except for CQR conditional coverage: 0.7). Average lengths: CIR (3.0), CIR+ (2.7), CHR (3.0), CQR (5.2). Right: Interquantile intervals at $X \approx 0.4$, using 50 quantiles with truncated high-density regions. Narrower intervals indicate higher densities. CIR and CIR+ select narrower intervals, with CIR+ choosing one fewer due to its scaled length being less than $\hat{s}-s$.}
    \label{fig:band_compare_wr}
\end{figure*}
\section{Theoretical Results}\label{sec:theory}
In this section, we mainly present the theoretical results for CIR and provide additional details, including the analysis of CIR+, in the appendix.
First, we show that CIR has the potential to achieve the optimal size for prediction intervals when
considering the marginal distribution if we assume that our quantile regression
model is sufficiently accurate.
The proof of  marginal coverage probability for CIR follows the same logic as the standard argument used in the general conformal prediction framework.
\begin{theorem}\label{thm:cir}(Finite-sample)
   If $(X_i,Y_i)$, for $i \in \mathcal{I}_\text{cal}\cup\mathcal{I}_\text{test} $, are exchangeable, then for $i \in \mathcal{I}_\text{test} $, the output of~Alg.~\ref{alg:CIR} satisfies:
\begin{align}
  \mathbb{P}\left[Y_{i} \in C^{\alpha}_{\hat{k}}(X_{i}) \right] \geq 1-\alpha.
\end{align} 
\end{theorem}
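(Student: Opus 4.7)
The plan is to run the standard exchangeability-based marginal coverage argument that underlies essentially every split conformal prediction result. The three ingredients I would rely on are (i) exchangeability of the conformity scores, (ii) the empirical-quantile lemma that controls the rank of a new exchangeable variable, and (iii) a monotonicity translation from the scalar score bound to the set-membership event.

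First I would fix the training split and condition on $\mathcal{D}_\text{train}$, so that the fitted quantile functions $\hat{q}_0,\ldots,\hat{q}_T$ are deterministic. Since the original tuples $(X_i,Y_i)$ for $i\in\mathcal{I}_\text{cal}\cup\mathcal{I}_\text{test}$ are exchangeable and the score $s(x,y)=\min\{k:y\in C^{\alpha}_k(x)\}$ from Equation~\eqref{eq:conformity-score-CIR} is a fixed function of $(x,y)$ through the frozen $\hat{q}_t$'s, the induced scores $\{s(X_i,Y_i)\}_{i\in\mathcal{I}_\text{cal}\cup\mathcal{I}_\text{test}}$ are exchangeable. Then I would apply the standard ``rank uniformity'' argument: with $r_\alpha=\lceil(1-\alpha)(1+|\mathcal{I}_\text{cal}|)\rceil$, the rank of the test score in the augmented sample is uniform on $\{1,\ldots,|\mathcal{I}_\text{cal}|+1\}$ (up to ties), which gives
\begin{equation*}
\mathbb{P}\bigl[\,s(X_{i},Y_{i})\le \hat{k}\,\bigr]\;\ge\;\frac{r_\alpha}{|\mathcal{I}_\text{cal}|+1}\;\ge\;1-\alpha,
\end{equation*}
where $\hat{k}$ is the $r_\alpha$-th order statistic of calibration scores as in Equation~\eqref{eq:khat}.

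The remaining step is to convert $\{s(X_{i},Y_{i})\le \hat{k}\}$ into $\{Y_{i}\in C^{\alpha}_{\hat{k}}(X_{i})\}$. Here I would invoke the nesting property of the construction: the algorithm builds $C^{\alpha}_k(x)$ by greedily appending the shorter of the two adjacent interquantile intervals to $C^{\alpha}_{k-1}(x)$, so $C^{\alpha}_{k-1}(x)\subseteq C^{\alpha}_k(x)$ and the sets form an increasing chain in $k$. Consequently, $s(X,Y)\le \hat{k}$ is equivalent to $Y\in C^{\alpha}_{\hat{k}}(X)$, and combining this with the previous display gives the claim.

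The main obstacle I anticipate is the interplay between the globally-shortest definition of $l_k$ in Step~4 of Algorithm~\ref{alg:CIR} and the greedy iterative description used to compute the score: a priori the globally shortest $k$-bin interval need not contain the globally shortest $(k{-}1)$-bin interval, which would break the nesting step above. I would address this by stating explicitly that under the algorithm's greedy-expansion implementation the chain is nested by construction, so the score and the output interval are compatible; in the unimodal regime highlighted around Equation~\eqref{eq:conformal-interval} the two coincide, and more generally one can simply take the nested greedy chain as the definition that enters the coverage guarantee without affecting the marginal claim.
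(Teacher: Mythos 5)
Your proposal is correct and follows essentially the same route as the paper's own one-line proof: exchangeability of the scores $s(X_i,Y_i)$, the standard rank/quantile argument giving $\mathbb{P}[s(X_i,Y_i)\le\hat{k}]\ge \lceil(1+|\mathcal{I}_\text{cal}|)(1-\alpha)\rceil/(1+|\mathcal{I}_\text{cal}|)\ge 1-\alpha$, and the translation to set membership. Your explicit treatment of the nesting property $C^{\alpha}_{k-1}(x)\subseteq C^{\alpha}_{k}(x)$ --- needed to equate $\{s(X_i,Y_i)\le\hat{k}\}$ with $\{Y_i\in C^{\alpha}_{\hat{k}}(X_i)\}$ --- is a point the paper's main-text proof leaves implicit (it is only stated as an assumption in the appendix), so your added care there is warranted rather than a deviation.
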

\begin{proof}
    Note that the score function $s(X_i, Y_i) = \min\Bigl\{k \in {1,\dots,T}: Y_i \in C^{\alpha}_k(X_i)\Bigr\}$ for $i\in \Ical\cup\Itest$ are also exchangeable. For any $(X_i,Y_i)$ in the test set, the rank of $s(X_i, Y_i)$ is smaller than $\hat{k}$ which in~Alg.~\ref{alg:CIR} with probability $\frac{\lceil (1+|\Ical|)(1-\alpha)\rceil}{1+|\Ical|}\ge 1-\alpha$.
\end{proof}
This theoretical result also holds for CIR+ (Alg.~\ref{alg:algorithm2}) and the detailed discussion is shown in the appendix.
As mentioned in Section~\ref{sec:method}, if the partition $T$ is large enough, the output in ~\eqref{eq:asym-oracle-interval}--\eqref{eq:asym-oracle-interval-int} is approximate to ~\eqref{eq:oracle-interval}--\eqref{eq:oracle-interval-int}.  The following result shows, if the quantiles are correctly estimated and the samples are i.i.d, the  $\hat{k}$ computed by CIR (Alg.~\ref{alg:CIR}) are asymptotically equal to the optimal as the sample size $|\mathcal{I}|\to \infty$ (also indicating $|\mathcal{I}_{cal}|\to \infty$).
Next theory relies on some assumptions :
\begin{enumerate}
\item \label{assump:iid} (i.i.d.) The samples are i.i.d., which is stronger than exchangeability.
\item (Consistency)\label{assump:consistency} When the black-box model estimates $P_{Y \mid X}$ consistently, each interquantile interval should contain the true label $Y$ with probability $1/T$, and the optimal number of interquantile intervals containing $Y$ should be $\lceil T(1-\alpha)\rceil$ to achieve $(1 - \alpha)$ coverage of true labels in the calibration set. This assumption is crucial, which similar to that in \cite{romano2019conformalized, sesia2021conformal}. While difficult in practice, it has been theoretically justified for some models like in \cite{meinshausen2006quantile}.

\item (Unimodality)\label{assump:unimodal} The true $P_{Y \mid X}$ is unimodal. 

\end{enumerate}
\begin{theorem}\label{thm:cir2}(Infinite-sample)
   Under Assumptions~\ref{assump:iid}--~\ref{assump:unimodal}, for $(X, Y)\in \mathcal{D}^{test}$, the $\hat{k}$ obtained from Alg.~\ref{alg:CIR} exhibits the following property:
\begin{align}
 &\mathbb{P}\left[ |\hat{k} - \lceil T(1-\alpha)\rceil|>\epsilon_n\right]  \leq \eta_n,\\
 &\mathbb{P}\left[\mathbb{P}\left[Y \in  C^{\alpha}_{\hat{k}}(X) \mid X \right] \geq 1-\alpha-\gamma_n\right] \geq 1-\zeta_n
\end{align} 
where we assume the calibration size $|\mathcal{I}_{cal}|= n \to \infty$ and $\epsilon_n \to 0, \eta_n \to 0, \gamma_n \to 0, \zeta_n \to 0$.
\end{theorem}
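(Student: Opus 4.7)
The plan is to reduce both claims to two ingredients: (a) under (Consistency) and (Unimodality), the conformity score $s(X,Y)$ is essentially uniformly distributed on $\{1,\ldots,T\}$; and (b) by (i.i.d.), the empirical $(1-\alpha)$-quantile of $n$ calibration scores concentrates around the population quantile at the parametric rate. Given (a) and (b), the first display follows immediately, and the second is obtained by translating the rank bound on $\hat{k}$ into a conditional-coverage bound by invoking consistency once more.

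For (a), Assumption~\ref{assump:consistency} should first be promoted to a quantitative statement: there exists $\delta_n\to 0$ such that $\sup_{t,x}\bigl|\mathbb{P}[Y\in I_t(X)\mid X=x]-1/T\bigr|\le\delta_n$. Under Assumption~\ref{assump:unimodal}, the shortest interval of conditional mass at least $k/T$ is a contiguous union of interquantile intervals containing the mode. The key observation is that the greedy rule in Alg.~\ref{alg:CIR}---which at each step adjoins the \emph{narrower} adjacent interquantile interval---coincides with the rule that adjoins the \emph{higher-density} neighbour, because across equal-probability bins narrower width is equivalent to higher density. Under unimodality, appending the higher-density neighbour is exactly the step that enlarges the highest-density region. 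Consequently $\mathbb{P}[s(X,Y)\le k\mid X]=k/T+O(\delta_n)$, and marginalising gives $F(k):=\mathbb{P}[s(X,Y)\le k]=k/T+O(\delta_n)$ uniformly in $k$.

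For (b) and the conclusion, Assumption~\ref{assump:iid} allows the DKW inequality on the calibration scores, yielding $\sup_k|F_n(k)-F(k)|=O_{\mathbb{P}}(n^{-1/2})$. Because $F$ rises in steps of $1/T+O(\delta_n)$, inverting at level $1-\alpha$ gives $|\hat{k}-\lceil T(1-\alpha)\rceil|\le\epsilon_n$ outside an event of probability at most $\eta_n$, with $\epsilon_n=O(T\delta_n+T/\sqrt{n})\to 0$ and $\eta_n\to 0$; this is the first display. For the second, Step~(a) gives $\mathbb{P}[Y\in C^{\alpha}_{\hat{k}}(X)\mid X]=\hat{k}/T+O(\delta_n)$, so on the same high-probability event $\mathbb{P}[Y\in C^{\alpha}_{\hat{k}}(X)\mid X]\ge 1-\alpha-\epsilon_n/T-O(\delta_n)$. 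Setting $\gamma_n$ equal to this error and $\zeta_n=\eta_n$ yields the second display.

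The main obstacle is the quantitative form of (Consistency) required by (a). The assumption as stated is only qualitative; to obtain explicit rates one must appeal to a uniform quantile-estimation result such as that of \cite{meinshausen2006quantile}, and additionally show that the greedy merging is \emph{stable} under small perturbations of the estimated quantiles---otherwise small errors near ties in adjacent widths could flip many merging decisions and lose a factor of $T$ in the bound. Under strict unimodality and sufficiently accurate quantile estimates, the ordering of interquantile lengths on either side of the mode is unambiguous with high probability, and the required stability follows; once this is secured the remaining arguments are standard empirical-process bookkeeping.
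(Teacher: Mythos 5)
Your proposal follows essentially the same route as the paper's proof: both reduce the claim to showing that the conformity-score CDF satisfies $F(k)\approx k/T$ uniformly in $k$ (via a quantitative version of the consistency assumption together with unimodality/nestedness of the $C^{\alpha}_k$), then apply a concentration inequality (your DKW, the paper's Hoeffding) to place the empirical $(1-\alpha)$-quantile of the calibration scores near $\lceil T(1-\alpha)\rceil$ and translate that rank bound back into conditional coverage. The only substantive difference is bookkeeping: the paper quantifies consistency as an $L^2$-in-$x$ bound $\rho_n$ and must therefore split the calibration set into points with good and bad quantile estimates (the sets $A_n$, $\mathcal{D}^{\mathrm{cal},a}$, $\mathcal{D}^{\mathrm{cal},b}$) before running the same argument, whereas you posit a uniform-in-$x$ bound $\delta_n$, which is stronger but lets you skip that partition; the greedy-merging stability issue you flag is indeed glossed over in the paper as well.
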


Intuitively, as $T$ large enough, the additional selection step in Alg. \ref{alg:algorithm2} becomes negligible, causing CIR+ to converge theoretically to CIR. The details of proof are shown in the appendix.
\paragraph{Remark:} 
The unimodality setting is not necessary for implementing  our methods or achieving marginal coverage in Thm. \ref{thm:cir} but only for properties of optimal interval length and conditional coverage in Thm. \ref{thm:cir2}.
In non-unimodal cases, the optimal prediction set may consist of multiple disjoint intervals rather than a single connected interval (guaranteed in the unimodal setting). To address this, a more effective approach involves ordering the interquantile intervals by length and making selections from this ordered set.
Nevertheless, empirical analysis in later sections and the appendix shows both methods maintain robust performance even on non-unimodal distributions, demonstrating their practical versatility.

\section{Experiments}\label{sec:experiment}

Our experimental implementation is based on the publicly available code for Conformal Histogram Regression (CHR) by \cite{sesia2021conformal}, which can be accessed at \texttt{https://github.com/msesia/chr}. We are deeply grateful for their work, serving as a significant source of inspiration for our work.
Our numerical experiments were conducted on Intel Xeon 2.10GHz CPUs in a computing cluster and each data set was analyzed using a single core.

\subsection{Synthetic Data Analysis} \label{sec:synthetic-data}

\begin{table}[t]
\centering
\setlength{\tabcolsep}{3pt} 
{\fontsize{7.2pt}{11.5pt}\selectfont
\begin{tabular}[t]{cccccc}
\toprule
$|\mathcal{I}|$ &  & CHR & CQR & CIR & CIR+ \\
\midrule
\multirow{4}{*}{500} & Marginal & 0.902 (0.001) & 0.901 (0.001) & 0.900 (0.002) & 0.897 (0.002) \\
 & Condit. & 0.881 (0.003) & 0.879 (0.004) & 0.878 (0.004) & 0.871 (0.004) \\
 & Width & 3.670 (0.039) & 5.165 (0.043) & 3.930 (0.057) & 3.790 (0.053) \\
 & Time & 36.597 (0.396) & 0.008 (0.000) & 0.072 (0.001) & 0.097 (0.001) \\
\midrule
\multirow{4}{*}{3000} & Marginal & 0.901 (0.001) & 0.900 (0.001) & 0.905 (0.001) & 0.900 (0.001) \\
 & Condit. & 0.887 (0.003) & 0.875 (0.003) & 0.895 (0.003) & 0.886 (0.003) \\
 & Width & 3.425 (0.017) & 5.320 (0.019) & 3.764 (0.028) & 3.632 (0.028) \\
 & Time & 73.735 (0.785) & 0.014 (0.000) & 0.404 (0.005) & 0.526 (0.007) \\
\midrule
\multirow{4}{*}{5000} & Marginal & 0.901 (0.001) & 0.900 (0.001) & 0.905 (0.001) & 0.900 (0.001) \\
 & Condit. & 0.889 (0.003) & 0.875 (0.003) & 0.895 (0.003) & 0.889 (0.003) \\
 & Width & 3.389 (0.013) & 5.345 (0.016) & 3.776 (0.022) & 3.643 (0.022) \\
 & Time & 103.637 (1.105) & 0.019 (0.000) & 0.663 (0.008) & 0.862 (0.010) \\
\bottomrule
\end{tabular}
}
\caption{Performance comparison of our methods (CIR and CIR+) and benchmarks on synthetic data for different sample size ($|\mathcal{I}|$), with skewness equals to $2.7$. The corresponding standard deviations are shown in parentheses. }
\label{tab:results_synthetic_n_wr}
\end{table}

\begin{table}[t]
\centering
\setlength{\tabcolsep}{3pt} 
{\fontsize{7.2pt}{11.5pt}\selectfont
\begin{tabular}[t]{cccccc}
\toprule
Skew &  & CHR & CQR & CIR & CIR+ \\
\midrule
\multirow{4}{*}{0.66} & Marginal & 0.901 (0.001) & 0.900 (0.001) & 0.906 (0.001) & 0.902 (0.001) \\
 & Condit. & 0.891 (0.005) & 0.883 (0.005) & 0.898 (0.004) & 0.889 (0.005) \\
 & Width & 5.627 (0.046) & 5.950 (0.031) & 5.900 (0.049) & 5.715 (0.054) \\
 & Time & 106.995 (2.049) & 0.018 (0.000) & 0.710 (0.015) & 0.911 (0.019) \\
\midrule
\multirow{4}{*}{1.71} & Marginal & 0.901 (0.001) & 0.900 (0.001) & 0.907 (0.001) & 0.902 (0.001) \\
 & Condit. & 0.891 (0.003) & 0.888 (0.003) & 0.897 (0.003) & 0.891 (0.003) \\
 & Width & 4.785 (0.025) & 5.436 (0.018) & 5.110 (0.026) & 4.870 (0.030) \\
 & Time & 103.026 (1.150) & 0.020 (0.000) & 0.687 (0.009) & 0.887 (0.011) \\
\midrule
\multirow{4}{*}{2.70} & Marginal & 0.901 (0.001) & 0.900 (0.001) & 0.905 (0.001) & 0.900 (0.001) \\
 & Condit. & 0.889 (0.003) & 0.875 (0.003) & 0.895 (0.003) & 0.889 (0.003) \\
 & Width & 3.389 (0.013) & 5.345 (0.016) & 3.776 (0.022) & 3.643 (0.022) \\
 & Time & 103.637 (1.105) & 0.019 (0.000) & 0.663 (0.008) & 0.862 (0.010) \\
\bottomrule
\end{tabular}
}
\caption{Performance comparison of our methods (CIR and CIR+) and benchmarks on synthetic data for different skewness of the conditional distribution of the response (Skew), with a sample size of 5000.}
\label{tab:results_synthetic_skew_wr}
\end{table}

We simulate a synthetic dataset with a one-dimensional feature $X$ and a continuous response $Y$ following \cite{romano2019conformalized,sesia2021conformal} and attach the details in the appendix.
Figure \ref{fig:band_compare_wr} illustrates the application of our methods to this toy data, comparing CIR and CIR+ with conformalized histogram regression (CHR) \cite{sesia2021conformal} and conformalized quantile regression (CQR) \cite{romano2019conformalized}. The left-hand side figure visualizes the resulting prediction bands for independent test data, comparing the analogous outputs across all methods. Both CIR and CIR+ are based on the same deep neural network and guarantee 90\% marginal coverage. CHR is capable of extracting information from all conditional quantiles estimated by the base model and automatically adapting to the estimated data distribution and produces relatively narrow intervals. Although our approaches omit the histogram step, it does not significantly impact our results. Both CIR and CIR+ yield intervals as narrow as CHR. In contrast, CQR can only leverage pre-specified lower and upper quantiles (e.g., 5\% and 95\%) and is therefore not adaptive to skewness. The right-hand side figure displays the interquantile intervals and their corresponding probabilities for $X \approx 0.4$. To elucidate the internal processes of the CIR and CIR+ algorithms, we have set only 50 quantiles for a tidier presentation. Additionally, we have truncated the regions with high density and focused on the areas where the information is concentrated. Note that the narrower interquantile intervals have higher densities. We can observe that both CIR and CIR+ select relatively narrower interquantile intervals at each step. In this sample, CIR+ selects one fewer interval compared to CIR because the scaled length of that interval is smaller than $\hat{s}-s$, leading to its exclusion. 
We simulate a synthetic dataset with a one-dimensional feature $X$ and a continuous response $Y$, drawn from the distribution illustrated in Figure~\ref{fig:band_compare_wr}. Our methods are applied to 2,000 independent observations from this distribution, utilizing the first 1,000 for training a deep quantile regression model and the remainder for calibration. Aligning with  \cite{sesia2021conformal}, we set 100 bins for CHR. 

Table~\ref{tab:results_synthetic_n_wr} shows CIR and CIR+ performance across sample sizes ($|\mathcal{I}|$) over 1000 independent experiments, with $|\mathcal{I}|$ split evenly between training and calibration and the number of interquantile intervals fixed at $T=100$.
 We evaluate marginal coverage, worst-slab conditional coverage as in \cite{cauchois2020knowing,romano2020classification}, average interval width, and computational cost per sample for calibration and prediction, with all methods using the same base model.
Both our methods and CHR achieve narrower intervals than CQR while maintaining marginal and conditional coverage. At smaller sample sizes, our method shows fluctuations due to imprecise quantile estimation from limited data, but as samples increase, quantile regression becomes more accurate and results stabilize.
CIR yields higher marginal coverage than the target level due to conformity score ties. In the ideal scenario, CIR+ should closely approach the coverage level when $N$ is large while CIR should show an additional approximately $1/(2T) = 0.005$ than coverage level $1 - \alpha = 0.9$ (As each interquantile interval carries $1/T$ probability, the threshold could occur anywhere, but on average falls at the midpoint). 
Notably, CHR is computationally intensive, with its runtime increasing as the sample size grows, which is primarily due to the increasingly time-consuming histogram construction process. As our confidence regions are computed directly from interquantile intervals and do not involve a binning process, the computational time required for calibrating and predicting each sample is negligible.

Table~\ref{tab:results_synthetic_skew_wr} shows results with 5000 samples under varying distributional skewness. Following \cite{sesia2021conformal}, we use a biased coin flip to invert $Y$ to $-Y$ for each data point, with coin bias as control. This generates distributions from skewed (Table~\ref{tab:results_synthetic_n_wr}) to symmetric $P_{Y \mid X}$. Results are shown against expected skewness $\mathbb{E}[(Y-\mu(X))^3/\sigma^3(X)]$, where $\mu(X)$ and $\sigma(X)$ are mean and standard deviation of $Y \mid X$.
For symmetric $P_{Y \mid X}$ (skewness near 0), all methods produce similar interval lengths. As skewness increases, our methods match CHR's performance across different asymmetry levels. While CHR yields narrowest intervals throughout, its high computational cost persists. CIR and CIR+ maintain efficiency regardless of skewness, balancing performance with practicality.
The appendix provides extended analyses, including comparisons with additional benchmarks such as distributional conformal prediction (DCP) (Chernozhukov et al., 2019) and DistSplit (Izbicki et al., 2020), along with evaluations across broader ranges of sample sizes, skewness levels, and results using random forest models.
\subsection{Real Data Analysis} \label{sec:real-data}
\begin{figure*}[t]
\centering
\includegraphics[width=0.8\textwidth]{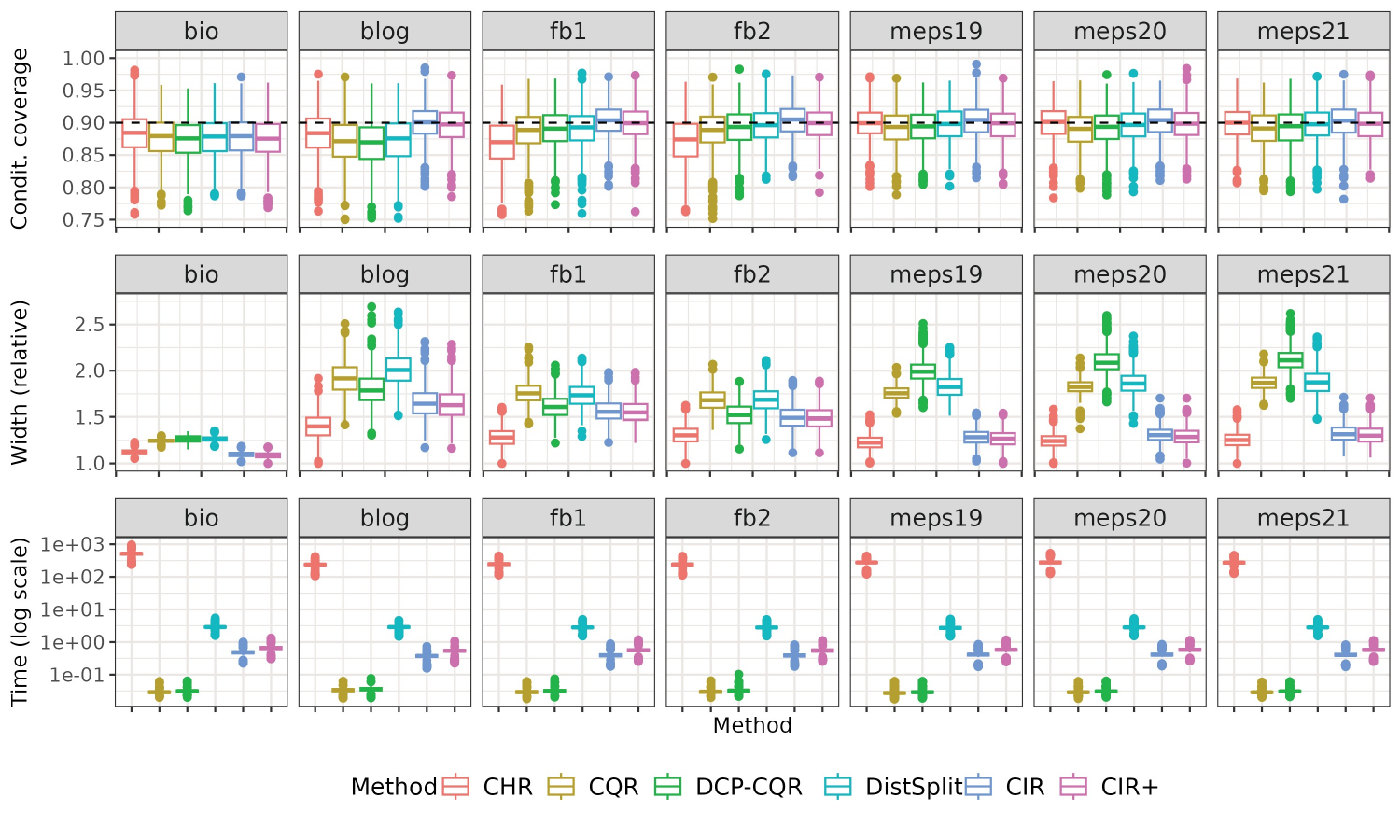} 
\caption{Performance of our method (CIR) compared to that of naive uncalibrated prediction intervals based on the same deep neural network regression model. Note that the top part of this plot shows marginal coverage.}
  \label{fig:results_real_nnet_naive}
\end{figure*}
We evaluate CIR and CIR+ on seven public-domain datasets previously analyzed in \cite{romano2019conformalized}: physicochemical properties of protein tertiary structure (bio) \cite{data-bio}, blog feedback (blog) \cite{blogData}, Facebook comment volume variants one (fb1) and two (fb2) \cite{facebookData}, and Medical Expenditure Panel Survey numbers 19 (meps19), 20 (meps20), and 21 (meps21) \cite{mepsData}. The first four datasets are from the UCI Machine Learning Repository \cite{Dua:2019}, while the MEPS datasets are from \cite{cohen2009medical}. For detailed descriptions of these datasets, we refer readers to \cite{romano2019conformalized}.
We compare our methods against CHR, CQR, DistSplit, and DCP-CQR. The latter, proposed in \cite{sesia2021conformal}, is designed to enhance the stability of DCP \cite{chernozhukov2019distributional} by integrating it with CQR \cite{romano2019conformalized}. All methods employ an identical deep quantile regression model for prediction. For comparison, we also present results using Random Forest method in the appendix. The Forests method \cite{meinshausen2006quantile} aggregates quantile estimates across an ensemble of decision trees, while the Neural Network approach \cite{taylor2000quantile} outputs multiple quantiles as a vector through shared parameters in a single network. Both methods are inherently designed to avoid the problem of crossing quantiles. Aligning with \cite{sesia2021conformal}, we set 1000 bins for CHR for this part. We assess their performance following the protocol described in the previous section, averaging results over 100 independent experiments per dataset. Each experiment utilizes 2000 samples for training, 2000 for calibration, and the remaining samples for testing. All features are standardized to zero mean and unit variance. The nominal coverage rate is set at 90\%.

Figure~\ref{fig:results_real_nnet_naive} compares conditional coverage and interval widths  across different methods and datasets. For meaningful cross-dataset comparison, prediction interval widths are normalized within each dataset, setting the smallest width to one. Computational times are presented on a logarithmic scale after similar normalization.
The results show all methods achieve satisfactory 
conditional coverage. While CHR and CIR+ alternately produce the most efficient (narrowest) prediction intervals across datasets, CIR+ and CIR exhibit substantially reduced computational overhead compared to CHR. This positions CIR+ as the optimal choice, effectively balancing computational efficiency with predictive performance.
CQR and DistSplit demonstrate comparable performance levels, whereas DCP-CQR tends to produce wider prediction intervals in some cases.
A comprehensive analysis, including marginal coverage (theoretically guaranteed for all methods) 
is provided in the appendix.

\section{Conclusion}\label{sec:conclusion}
In this paper, we introduce two novel conformal regression methods: Conformal Interquantile Regression (CIR) and its enhanced variant, CIR+. These methods are designed to construct prediction intervals that offer both guaranteed coverage and computational efficiency, effectively addressing key limitations of existing approaches. Specifically, our methods overcome the challenges of suboptimal performance on skewed data while avoiding the computational burden.
Our theoretical analysis demonstrates that both CIR and CIR+ not only guarantee marginal coverage but also achieve desired conditional coverage under appropriate conditions, while minimizing expected prediction interval length. 
Through extensive testing on synthetic and real datasets, we demonstrate that these methods match the performance of leading alternatives with substantially reduced computational cost. The two methods offer different solutions to the coverage-efficiency balance: CIR produces wider, more conservative intervals, while CIR+ generates narrower intervals by analyzing endpoint interquantile widths at a slight computational premium.  The performance of CIR and CIR+ can be affected with small datasets due to challenges in training quantile regression models effectively.  Our future work will focus on extending these frameworks to time series data, preserving its theoretical guarantees while expanding its applications.

\section*{Acknowledgments}
This work was supported by the National Natural Science Foundation of China (Grant 62506315) and City University of Hong Kong (Grants 9610639, 7020161).

\bibliography{aaai2026}

\appendix
\section{Additional Theoretical Results}
The proof of  marginal coverage probability for CIR follows the same logic as the standard argument used in the general conformal prediction framework.
\begin{theorem}\label{thm:CIR+}(Finite-sample)
   If $(X_i,Y_i)$, for $i \in \mathcal{I}_\text{cal}\cup\mathcal{I}_\text{test} $, are exchangeable, then for $i \in \mathcal{I}_\text{test} $, the output of Alg.~\ref{alg:algorithm2} satisfies:
\begin{align}
  \mathbb{P}\left[Y_{i} \in C^{\alpha}_{\hat{k_i}}(X_{i}) \right] \geq 1-\alpha.
\end{align} 
\end{theorem}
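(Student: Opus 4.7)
The plan is to mirror the CIR proof of Thm.~\ref{thm:cir}, with one extra layer of bookkeeping to handle the continuous tie-breaker $e_{k_i}(X_i)$ in the refined score. Three ingredients suffice: exchangeability of the scores, the standard rank-based quantile bound, and a verification that the test-time construction of $\hat{k}_i$ in Alg.~\ref{alg:algorithm2} covers $Y_i$ whenever its score lies at or below the calibration threshold $\hat{s}$.

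First I would check that $\{s(X_i,Y_i)\}_{i\in\Ical\cup\Itest}$ is exchangeable. Since the quantile estimates $\hat{q}_t$ are trained only on $\Itrain$ and then held fixed, both $k_i$ and $e_{k_i}(X_i)$ are deterministic functions of $(X_i,Y_i)$ together with $\hat{q}_0,\dots,\hat{q}_T$, so exchangeability of the data transfers directly to the scores. The usual conformal quantile lemma then yields, for each test index $j\in\Itest$,
\begin{equation*}
\mathbb{P}\bigl[s(X_j,Y_j)\leq\hat{s}\bigr]\ \geq\ \frac{\lceil(1-\alpha)(1+|\Ical|)\rceil}{1+|\Ical|}\ \geq\ 1-\alpha,
\end{equation*}
because $\hat{s}$ is the $r_\alpha$-th smallest of the $|\Ical|$ calibration scores.

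It then remains to establish the set inclusion $\{s(X_j,Y_j)\leq\hat{s}\}\subseteq\{Y_j\in C^{\alpha}_{\hat{k}_j}(X_j)\}$. Write $\hat{s}=m+f$ with $m=\lfloor\hat{s}\rfloor$ and $f=\hat{s}-m\in[0,1)$, and let $k_j$ denote the minimum number of merged interquantile blocks that cover $Y_j$. Using the nesting $C^{\alpha}_k(x)\subseteq C^{\alpha}_{k+1}(x)$ inherited from CIR's sequential expansion, I would split into three cases: if $k_j\leq m$ then $Y_j\in C^{\alpha}_{m}(X_j)\subseteq C^{\alpha}_{\hat{k}_j}(X_j)$ automatically; if $k_j=m+1$ then the score inequality forces $e_{m+1}(X_j)\leq f$, which by the indicator rule in Line~7 of Alg.~\ref{alg:algorithm2} promotes $\hat{k}_j$ to $m+1=k_j$ and again gives coverage; and if $k_j\geq m+2$ the score strictly exceeds $m+1>\hat{s}$, contradicting the hypothesis.

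The main obstacle is the borderline case $k_j=m+1$: one must confirm that the indicator in Line~7 is indexed and oriented so that $\hat{k}_j$ really does jump from $m$ to $m+1$ precisely on the event $\{e_{m+1}(X_j)\leq f\}$, and must also verify that $e_k(X_j)$ is computable from $X_j$ and the trained quantiles alone (as is clear from Eq.~\eqref{eq:e}), so that no $Y_j$-information leaks into the test-time rule. Once this bookkeeping is pinned down, the argument reduces to the generic conformal machinery already used for Thm.~\ref{thm:cir}.
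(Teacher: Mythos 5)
Your overall route coincides with the paper's: the published proof consists of exactly your first two ingredients (exchangeability of the refined scores once $\hat q_0,\dots,\hat q_T$ are frozen on the training split, plus the rank/quantile bound $\lceil(1-\alpha)(1+|\Ical|)\rceil/(1+|\Ical|)\ge 1-\alpha$) and stops there, leaving the set inclusion $\{s(X_j,Y_j)\le\hat s\}\subseteq\{Y_j\in C^{\alpha}_{\hat k_j}(X_j)\}$ implicit. You are right that this inclusion is the part that actually needs checking, and your Cases 1 and 3 are sound given the nesting $C^{\alpha}_{k}(x)\subseteq C^{\alpha}_{k+1}(x)$ (which holds for the greedy expansion described in the text, though not automatically for the ``shortest window of $k$ consecutive intervals'' phrasing in Line 4; the paper's appendix sidesteps this by assuming nesting outright, and you should state which construction you rely on).

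The genuine gap is Case 2, which you assert rather than prove, and which does not follow from Line 7 of Alg.~\ref{alg:algorithm2} as printed. Writing $\hat s=m+f$, a point with $k_j=m+1$ has score $m+e_{m+1}(X_j)$, so $s(X_j,Y_j)\le\hat s$ iff $e_{m+1}(X_j)\le f$; the sublevel set of the score is therefore $C^{\alpha}_{m+1}(x)$ precisely when $e_{m+1}(x)\le f$ (a \emph{short} newly added interquantile interval means a \emph{small} score, hence inclusion). The rule consistent with your argument is $\hat k_j=m+\mathbf{1}\{e_{m+1}(X_j)\le f\}$. Line 7 instead promotes on the event $\{e_{m}(X_j)>f\}$: the index is off by one and the inequality is oriented the opposite way (the main text and Figure~\ref{fig:band_compare_wr} agree with Line 7, so this is not a stray typo you can silently correct). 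You flag this as bookkeeping ``to be pinned down,'' but it is the crux: with the rule exactly as written your chain of implications in Case 2 breaks, and the prediction set can fail to contain the score's sublevel set, so the theorem is not established for the algorithm as stated. To close the proof you must either show that the printed rule still dominates the sublevel set (it does not in general) or explicitly prove the theorem for the corrected rule, under which your three-case argument goes through.
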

\begin{proof}
    Note that the score function $$s(X_i, Y_i) =  \min \Bigl\{k - 1 + e_k(X_i): Y_i \in C^{\alpha}_k(X_i)\Bigr\}$$ where 
\begin{equation*}
\begin{aligned} 
e_k(x) = \frac{1}{c} \min \{&\hat{q}_{l_{k-1} }(x)-\hat{q}_{l_{k-1}-1}(x),\\ &\hat{q}_{l_{k-1} + {k}}(x)-\hat{q}_{l_{k-1} + {k} -1}(x) \},
\end{aligned}
\end{equation*} for $i\in \Ical\cup\Itest$ are also exchangeable. For any $(X_i,Y_i)$ in the test set, the rank of $s(X_i, Y_i)$ is smaller than $\hat{s}$ which in Alg.~\ref{alg:algorithm2} with probability $\frac{\lceil (1+|\Ical|)(1-\alpha)\rceil}{1+|\Ical|}\ge 1-\alpha$.
\end{proof}

\paragraph{Proof of Theorem\ref{thm:cir2}}
Suppose these conditions hold:
\begin{enumerate}
\item (i.i.d.) The samples are i.i.d., which is stronger than exchangeability.
\item (consistency) The black-box model estimates $P_{Y \mid X}$ consistently, which means the quantiles are correctly estimated. When it holds, each interquantile interval should have probability $1/T$ of covering the true label $Y$ and the optimal number of interquantile intervals containing $Y$ should be $\lceil T(1-\alpha)\rceil$ in order to cover $(1 - \alpha)$  proportions of true labels in the calibration set.
\item (Unimodality) The true $P_{Y \mid X}$ is unimodal. 
\end{enumerate}
\renewcommand{\P}[1]{\mathbb{P}{\left[#1\right]}}
\newcommand{\Ps}{\mathbb{P}}
\newcommand{\E}[1]{\mathbb{E}{\left[#1\right]}}
Define
\begin{align*}
  \hat{F}(y \mid x) := {\hat{t}(y)}/T ,
\end{align*}
where $\hat{t}(y) = \max \{ t \in \{1,\ldots,T\} : y \leq \hat{q}_{t}\}$. Note that  $\hat{F}(\hat{q}_{t} \mid x)=t/T$.
Assume $|\mathcal{I}_{cal}| = n$, for all $t \in \{1,\ldots,T\}$,
\begin{align} \label{eq:consistency}
   \P{\E{\left( t/T - F(\hat{q}_{t} \mid X) \right)^2 \mid \mathcal{D}^{\mathrm{train}}} \leq \rho_n^{2}}
    & \geq 1-\rho_n^{2}.
\end{align}
Define the event $A_n$ as
\begin{align*}
  A_n := \left\{ x : \sup_{t \in \{1,\ldots,T\}} | \hat{F}(\hat{q}_{t} \mid x) - F(\hat{q}_{t} \mid x) | > \rho_n^{1/3} \right\}.
\end{align*}
 For any fixed $t \in \{1,\ldots,T\}$,
\begin{align*}
  &\P{X \in A_n}\\
   &= \P{\sup_{t' \in \{1,\ldots,T\}} | \hat{F}(\hat{q}_{t'} \mid x) - F(\hat{q}_{t'} \mid x) |^2 > \rho_n^{2/3}} \\
  & \leq T \, \P{| \hat{F}(\hat{q}_t \mid x) - F(\hat{q}_t \mid x) |^2 > \rho_n^{2/3} } \\
  & \leq T \left( \rho_n^{-2/3} \E{ \E{| \hat{F}(\hat{q}_t \mid x) - F(\hat{q}_t \mid x) |^2 \mid \mathcal{D}^{\mathrm{train}}} } \right) \\
  & \leq T \left( \rho_n^{-2/3} (4\rho_n^2+\rho_n^2) \right) \\
  & \leq 5T \rho_n^{4/3}
\end{align*}
The second inequality above is Markov's inequality. Let $T = \rho_n^{-1}$, we have $\P{X \in A_n^c} \geq  1 - 5\rho_n^{1/3}$.
Partition the calibration data points into
\begin{align*}
  & \mathcal{D}^{\mathrm{cal},a} := \{ i \in \mathcal{D}^{\mathrm{cal}} : X_i \in A_n\},\\
  & \mathcal{D}^{\mathrm{cal},b} := \{ i \in \mathcal{D}^{\mathrm{cal}} : X_i \in A_n^{\mathrm{c}}\},
\end{align*}
for any $\epsilon>0$,
\begin{align*}
  &\P{|\mathcal{D}^{\mathrm{cal},a}| \geq 5 n T \rho_n^{4/3} + \epsilon }\\
  & \leq \P{|\mathcal{D}^{\mathrm{cal},a}| \geq n \mathbb{P}\{X \in A_n\} + \epsilon } \\
  & \leq \P{\frac{1}{n} \sum_{i\in\mathcal{I}_{cal}} \mathbf{1}\{X_i \in A_n\}}  \geq \P{X_i \in A_n\} + \frac{\epsilon}{n} } \\
  & \leq \exp \left( -\frac{2\epsilon^2}{n} \right).
\end{align*}
Therefore, setting $\epsilon = c\sqrt{n \log n}$ for some constant $c>0$, yields 
\begin{align*}
  \P{|\mathcal{D}^{\mathrm{cal},a}| \geq 5 n T \rho_n^{4/3} + c\sqrt{n \log n} }
  & \leq n^{-2c^2}.
\end{align*}

For any fixed $t \in \{0,\ldots,T\}$, $k_i \geq t$, assume set $C_{k_i} = (\hat{q}_{\hat{l}_{k_i}-1},\hat{q}_{\hat{l}_{k_i}+k_i})$. In our setting, intervals containing more interquantile intervals are strictly longer than those containing fewer intervals—a property that naturally holds in unimodal settings. Formally, we assume $ C^{\alpha}_{k}(x) \subset C^{\alpha}_{k+1}(x)$ for each $k$.
Then, note that
\begin{align*}
  \P{k_i \leq t}
  & = \P{ Y_i \in C_{k_i} } \\
  & = F(\hat{q}_{\hat{l}_{k_i}+k_i}) - F(\hat{q}_{\hat{l}_{k_i}-1}) \\
  & \geq \hat{F}(\hat{q}_{\hat{l}_{k_i}+k_i}) - \hat{F}(\hat{q}_{\hat{l}_{k_i}-1}) - 2 \rho_n^{1/3} \\
  & \geq t/T - 2 \rho_n^{1/3}.
\end{align*}
Above, the first inequality follows from the definition of $\mathcal{D}^{\mathrm{cal},b}$.
Equivalently, we can rewrite this as
\begin{align*}
  \P{k_i > t + 2 \rho_n^{1/3} + \delta_n } \leq 1 - t /T - \delta_n,
\end{align*}
for any $\delta_n > 0$.
Now, partition $\mathcal{D}^{\mathrm{cal},b}$ into the following two disjoint subsets:
\begin{align*}
  \mathcal{D}^{\mathrm{cal}, b1} & := \{ i \in \mathcal{D}^{\mathrm{cal},b} : k_i \leq t + 2 \rho_n^{1/3} + \delta_n\}, \\
  \mathcal{D}^{\mathrm{cal}, b2} & :=\{ i \in \mathcal{D}^{\mathrm{cal},b} : k_i > t + 2 \rho_n^{1/3} + \delta_n\}.
\end{align*}
 Then we bound $|\mathcal{D}^{\mathrm{cal},b2}|$ with Hoeffding's inequality. 
For any $i \in \mathcal{D}^{\mathrm{cal}}$, define $\tilde{k}_i = k_i$ if $i \in \mathcal{D}^{\mathrm{cal},b}$ and $\tilde{k}_i  = t$ otherwise.
For any $\epsilon>0$,
\begin{align*}
  &\Ps\Big[|\mathcal{D}^{\mathrm{cal}, b2}|\geq n (1-t/T - \delta_n ) + \epsilon \Big] \\
  &\leq \Ps\Big[\frac{1}{n} \sum_{\mathcal{D}^{\mathrm{cal}, b}} \mathbf{1}\{\tilde{k}_i > t + 2 \rho_n^{1/3} + \delta_n\} \\  &\quad \quad \geq \Ps\Big[k_i > t + 2 \rho_n^{1/3} + \delta_n\Big] + \frac{\epsilon}{n} \Big] \\
  &\leq \Ps\Big[\frac{1}{n} \sum_{i=1}^{n} \mathbf{1}\{\tilde{k}_i > t + 2 \rho_n^{1/3} + \delta_n\}  \geq \\  &\quad \quad\Ps\Big[\tilde{k}_i > t + 2 \rho_n^{1/3} + \delta_n\Big] + \frac{\epsilon}{n} \Big] \\
  &\leq \exp \left( -\frac{2\epsilon^2}{n} \right).
\end{align*}
Therefore, setting $\epsilon = c\sqrt{n \log n}$, for some constant $c>0$, yields 
\begin{align*}
  \P{|\mathcal{D}^{\mathrm{cal}, b2}| \geq n (1-t/T - \delta_n) + c\sqrt{n \log n} } 
  & \leq n^{-2c^2}.
\end{align*}
As $|\mathcal{D}^{\mathrm{cal}, b1}| = n -|\mathcal{D}^{\mathrm{cal}, a}| - |\mathcal{D}^{\mathrm{cal}, b2}|$, combining the above result  yields:
\begin{align*}
  &\P{|\mathcal{D}^{\mathrm{cal}, b1}| \geq n t/T + n \delta_n - 5 n T \rho_n^{4/3} - 2 c \sqrt{n \log n}  } \\
  & \geq 1 - 2 n^{-2c^2}.
\end{align*}
If we choose $$\delta_n = t/(nT) +  T\left( \rho_n^{4/3} + \rho_n^2 \right)+ 2 c \sqrt{(\log n)/n},$$ this becomes
\begin{align*}
  &\P{|\mathcal{D}^{\mathrm{cal}, b1}| \geq t (n+1)/T  } \geq 1 - 2 n^{-2c^2},
\end{align*}
which means
\begin{align*}
  & \Ps[k(t) \leq t + t/(nT) + 2 \rho_n^{1/3} +  5 T \rho_n^{4/3} + 2 c \sqrt{(\log n)/n} ]
   \\&\geq 1 - 2 n^{-2c^2}
\end{align*}
As we note the $\lceil(1-\alpha)T \rceil\text{-th}$ $ k_i $ as $\hat{k}$, 
\begin{align*}
    \mathbb{P}[ \hat{k} \leq \lceil (1 - \alpha)T \rceil + &\lceil (1 - \alpha)T \rceil/nT + 2 \rho_n^{1/3} +  5 T \rho_n^{4/3} \\ 
   &+ 2 c \sqrt{(\log n)/n} ]\geq 1 - 2 n^{-2c^2}
\end{align*}
As $n\to \infty$, WLOG, let $T = \rho_n^{-1}$, $$\epsilon_{n} =1/n + 7 \rho_n^{1/3} +  2 c \sqrt{(\log n)/n}$$
and $\eta_n = 2 n^{-2c^2}$, which leads to our result.
Similarly as the proof before,
 \begin{align*}
  \P{k_i \leq t}
  & = \P{ Y_i \in C_{k_i} } \\
  & = F(\hat{q}_{\hat{l}_{k_i}+k_i}) - F(\hat{q}_{\hat{l}_{k_i}-1}) \\
  & \leq {F}(\hat{q}_{\hat{l}_{k_i}+k_i}) - {F}(\hat{q}_{\hat{l}_{k_i}-1}) + 2 \rho_n^{1/3} \\
  & \leq t/T + 2 \rho_n^{1/3}.
\end{align*}
Above, the first inequality follows from the definition of $\mathcal{D}^{\mathrm{cal},b}$. The second inequality follows from the observation that $C_{k_i}$ could not be optimal if $ F(\hat{q}_{\hat{l}_{k_i}+k_i}) - F(\hat{q}_{\hat{l}_{k_i}-1}) \geq t/T +2K\rho_n$ because it would be possible to obtain a shorter feasible interval by removing either the leftmost or the rightmost interquantile interval.

For any $\delta_n > 0$, partition $\mathcal{D}^{\mathrm{cal},b}$ into the following two disjoint subsets:
\begin{align*}
  \mathcal{D}^{\mathrm{cal}, b1} & := \{ i \in \mathcal{D}^{\mathrm{cal},b} : k_i \leq t - 2 \rho_n^{1/3} - \delta_n - 2K\rho_n\}, \\
  \mathcal{D}^{\mathrm{cal}, b2} & :=\{ i \in \mathcal{D}^{\mathrm{cal},b} : k_i > t - 2 \rho_n^{1/3} - \delta_n - 2K\rho_n\}.
\end{align*}
 Then we also bound $|\mathcal{D}^{\mathrm{cal},b2}|$ with Hoeffding's inequality. 
For any $i \in \mathcal{D}^{\mathrm{cal}}$, define $\tilde{k}_i = k_i$ if $i \in \mathcal{D}^{\mathrm{cal},b}$ and $\tilde{k}_i  = t$ otherwise.
For any $\epsilon>0$,
\begin{align*}
  &\Ps\Big[|\mathcal{D}^{\mathrm{cal}, b2}| \geq n (1-t/T - \delta_n ) + \epsilon \Big] \\
  &\leq \Ps\Big[\frac{1}{n} \sum_{\mathcal{D}^{\mathrm{cal}, b}} \mathbf{1}\{\tilde{k}_i \leq t - 2 \rho_n^{1/3} - \delta_n - 2K\rho_n\}  \\&\quad \quad \geq \Ps\Big[k_i \leq t - 2 \rho_n^{1/3} - \delta_n - 2K\rho_n\Big] + \frac{\epsilon}{n} \Big] \\
  &= \Ps\Big[\frac{1}{n} \sum_{i=1}^{n} \mathbf{1}\{\tilde{k}_i \leq t - 2 \rho_n^{1/3} - \delta_n - 2K\rho_n\} \\&\quad \quad \geq \Ps\Big[k_i \leq t - 2 \rho_n^{1/3} - \delta_n - 2K\rho_n\Big] + \frac{\epsilon}{n} \Big] \\
  &\leq \Ps\Big[\frac{1}{n} \sum_{i=1}^{n} \mathbf{1}\{\tilde{k}_i \leq t - 2 \rho_n^{1/3} - \delta_n - 2K\rho_n\}  \\&\quad \quad \geq \Ps\Big[\tilde{k}_i \leq t - 2 \rho_n^{1/3} - \delta_n - 2K\rho_n\Big] + \frac{\epsilon}{n} \Big] \\
  &\leq \exp \left( -\frac{2\epsilon^2}{n} \right).
\end{align*}
Therefore, setting $\epsilon = c\sqrt{n \log n}$, for some constant $c>0$, yields 
\begin{align*}
  \P{|\mathcal{D}^{\mathrm{cal}, b2}| \geq n (1-t/T - \delta_n) + c\sqrt{n \log n} } 
  & \leq n^{-2c^2}.
\end{align*}
As $|\mathcal{D}^{\mathrm{cal}, b1}| = n -|\mathcal{D}^{\mathrm{cal}, a}| - |\mathcal{D}^{\mathrm{cal}, b2}|$, combining the above result  yields:
\begin{align*}
  &\P{|\mathcal{D}^{\mathrm{cal}, b1}| \geq n t/T + n \delta_n - 5 n T \rho_n^{4/3} - 2 c \sqrt{n \log n}  } \\
  & \geq 1 - 2 n^{-2c^2}.
\end{align*}
If we choose $$\delta_n = t/(nT) +  T\left( \rho_n^{4/3} + \rho_n^2 \right)+ 2 c \sqrt{(\log n)/n},$$ this becomes
\begin{align*}
  \P{|\mathcal{D}^{\mathrm{cal}, b1}| \geq t (n+1)/T  } 
   \geq 1 - 2 n^{-2c^2},
\end{align*}
which means
\begin{align*}
  & \Ps\Big[ k(t) \geq  t - 2 \rho_n^{1/3} - 2K\rho_n -t/(nT) -  T\left( \rho_n^{4/3} - \rho_n^2 \right) \\
  & \quad \quad \quad \quad \quad \quad \quad \quad \quad \quad -  2 c \sqrt{(\log n)/n} \Big]
   \geq 1 - 2 n^{-2c^2}.
\end{align*}
As we note the $\lceil(1-\alpha)T \rceil\text{-th}$ $ k_i $ as $\hat{k}$, 
\begin{align*}
    &\mathbb{P}[ \hat{k} \geq \lceil (1 - \alpha)T \rceil - 2K\rho_n-\lceil (1 - \alpha)T \rceil/nT 
    \\
  &  \quad \quad - 2 \rho_n^{1/3} -  5 T \rho_n^{4/3} - 2 c \sqrt{(\log n)/n} ]\geq 1 - 2 n^{-2c^2}.
\end{align*}
Remember we proved before that $\P{X \in A_n^c} \geq  1 - 5\rho_n^{1/3}$ and the definition of $\hat{k}$ and $C_{\hat{k}}^\alpha$,
WLOG, let $T = \rho_n^{-1}$,
\begin{align*}
    &\mathbb{P}[\mathbb{P}[Y \in C_{\hat{k}}^\alpha(x) | X = x] \geq 1 - \alpha - 1/n - 7 \rho_n^{1/3} \\&
    \quad \quad \quad \quad \quad \quad-  2 K\rho_n- 2 c \sqrt{(\log n)/n} - 2\rho_n^{1/3}]  \\&\geq 1 - 5\rho_n^{1/3}- 2 n^{-2c^2}
\end{align*}
Therefore, $\gamma_n = 1/n - 9 \rho_n^{1/3} -  2 K\rho_n- 2 c \sqrt{(\log n)/n}$ and $\zeta_n = 2 n^{-2c^2}+ 5\rho_n^{1/3}$.
\section{Empirical Results}
We provides additional details about the experiments with synthetic and real data in this section.  When implementing Alg.~\ref{alg:CIR} and Alg.~\ref{alg:algorithm2}, we can initialize $k$ with $\lceil rate \times (1-\alpha)T\rceil$ intervals based on the theoretical optimal value $\hat{k}=\lceil(1-\alpha)T\rceil$. This strategy maintains result integrity while reducing computational overhead, with the $rate$ parameter tunable for larger datasets.
For fair comparison, we report results without rate optimization ($rate = 0$). Even in this unoptimized setting, both CIR and CIR+ demonstrate superior efficiency and interval quality compared to baseline methods.
\subsection{Expiriment Setting}
We estimate the distribution of $Y \mid X$ using the following quantile regression models and the hyperparameter set similar to \cite{sesia2021conformal}.
\begin{itemize}
    \item \textbf{Deep Neural Network (NNet):} 
        \begin{itemize}
            \item Architecture: Three fully connected layers with hidden dimension of 64
            \item Activation: ReLU functions
            \item Loss function: Pinball loss \cite{taylor2000quantile} for conditional quantile estimation
            \item Regularization: Dropout with rate 0.1
            \item Optimizer: Adam \cite{kingma2014adam} with learning rate 0.0005
            \item Training:
                \begin{itemize}
                    \item Maximum epochs: 2000
                    \item Optimal epoch count determined by cross-validation
                    \item Criterion: Minimization of loss function on hold-out data
                \end{itemize}
        \end{itemize}
    
    \item \textbf{Random Forest (RF):} 
        \begin{itemize}
            \item Implementation: Python \texttt{Scikit-garden} package for quantile regression forests \cite{meinshausen2006quantile}
            \item Hyperparameters:
                \begin{itemize}
                    \item Minimum samples to split an internal node: 50
                    \item Number of trees: 100
                    \item Other parameters: Default settings
                \end{itemize}
        \end{itemize}
\end{itemize}
\subsection{Synthetic Dataset}

We construct a synthetic regression dataset designed to capture nonlinear, heteroscedastic, and asymmetric noise patterns as \cite{romano2019conformalized,sesia2021conformal}. Let $X \in [0,1]$ be uniformly sampled inputs. For each input $x$, the response variable $Y$ is generated according to
\begin{equation}
    \begin{aligned}
     Y_0 = &\mathrm{Poisson}\!\left(\sin^{2}(2\pi x) + 0.1\right)
      + 0.2\, x\, \varepsilon_1\\
      &+ \mathbf{1}\{\varepsilon_2 < 0.09\}\left(5 + 2\varepsilon_3\right),   
    \end{aligned}
\end{equation}
where $\varepsilon_1,\varepsilon_2,\varepsilon_3$ are independent draws from $\mathcal{N}(0,1)$ or $\mathrm{Unif}(0,1)$ as appropriate. 
The first term introduces a smooth periodic structure with varying magnitude; the second term induces input-dependent variance; and the third term injects infrequent but large positive jumps, mimicking real-world bursty behaviors.  
To explicitly control the skewness of the observed data, a symmetry parameter $symmetry \in [0,1]$. The final response is given by
\[
Y =
\begin{cases}
-Y_0, & \text{with probability } symmetry,\\[3pt]
\;\;Y_0, & \text{otherwise}.
\end{cases}
\]
When $symmetry=0$, the distribution is strongly right-skewed due to the jump component; when $symmetry=0.5$, the distribution becomes approximately symmetric; and for $symmetry>0.5$, the distribution becomes left-skewed.

\section{Non-unimodal Model Case}
When working in a non-unimodal setting, we sort the interquantile bins by length from shortest to longest for each input and record each bin’s rank in this ordering. On the calibration set, each observation falls into exactly one bin; we take the rank of that bin as its conformity score. We then select a cutoff rank using the standard conformal calibration rule at the target miscoverage level. For a new input, the prediction set is the union of all bins with ranks at or below this calibrated cutoff.

\subsection{Additional Expiriments}

Due to space constraints, we present only a subset of our synthetic data results based on neural networks in the tables. For a more comprehensive view, Figure~\ref{fig:combined} displays additional results, while Figure~\ref{fig:combined_rf} provides comparisons using a random forest base model. In both experiments, our proposed methods and CHR consistently produce shorter prediction intervals.

We also evaluate these methods based on their worst-slab conditional coverage \cite{cauchois2020knowing}, estimated following the approach in \cite{romano2020classification}. Notably, all methods achieve the theoretically guaranteed 90\% marginal coverage.

Our analysis reveals that CHR, while effective, is computationally expensive in both the calibration and prediction phases compared to other methods. In contrast, CIR and CIR+ demonstrate optimal overall performance, striking a balance between computational efficiency and prediction accuracy. This balanced performance makes CIR and CIR+ the most advantageous methods when considering both time efficiency and predictive capabilities.

We also expand our analysis with additional results on real data. Figure~\ref{fig:results_real_rf_naive} presents further comparisons using a random forest base model. This figure is structured as follows:

\begin{itemize}
    \item The bottom panel illustrates the total time consumed by each method in the calibration and prediction steps. Notably, CHR demonstrates significantly higher computational demands compared to other methods.
    
    \item The middle panel displays the average interval length for each method across different datasets.
    
    \item The top panel compares these alternative methods in terms of their worst-slab conditional coverage \cite{cauchois2020knowing}, estimated following the approach in \cite{romano2020classification}.
\end{itemize}

While CIR and CIR+ produce slightly longer prediction intervals than CHR for the blog, fb1, and fb2 datasets, they exhibit superior conditional coverage performance. This balance between interval length and coverage accuracy, combined with their computational efficiency, positions CIR and CIR+ as the best overall performers when considering both time efficiency and predictive capabilities.

Importantly, all methods achieve the theoretically guaranteed 90\% marginal coverage. For a more comprehensive view of performance metrics, readers are directed to Table~\ref{tab:results_real}.
\begin{figure*}[t]
\centering
\begin{subfigure}{\textwidth}
    \centering
    \includegraphics[width=0.9\textwidth]{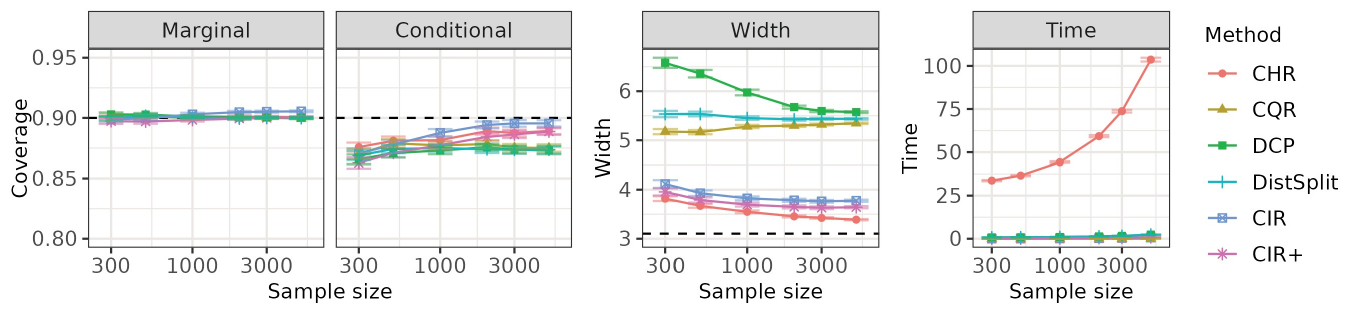}
\end{subfigure}
\begin{subfigure}{\textwidth}
    \centering
    \includegraphics[width=0.9\textwidth]{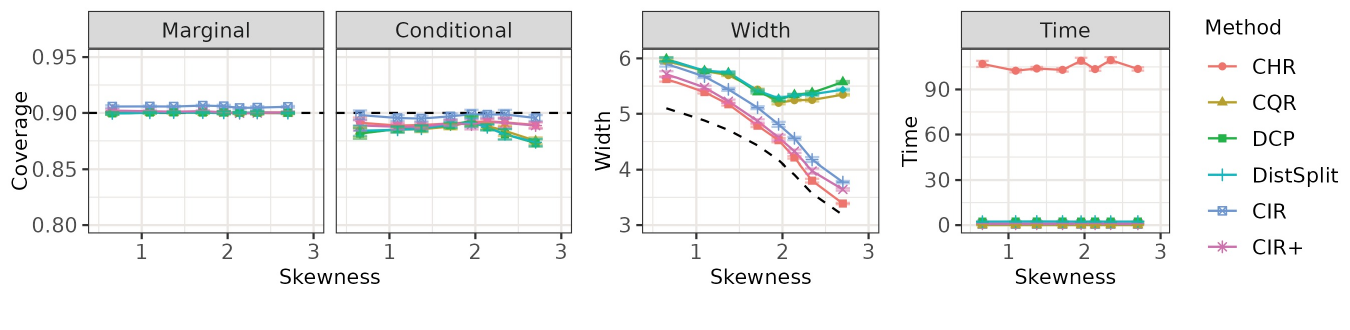}
\end{subfigure}

\caption{Performance comparison of our methods (CIR and CIR+) and benchmarks on synthetic data based on neural networks. Upper: Effects of sample size. The dashed lines and curves correspond to an omniscient oracle. The vertical error bars span two standard errors from the mean. Lower: Effects of distribution skewness of the conditional distribution of the response, with a sample size of 5000. The maximum skewness (near 3) corresponds to data in the upper figure.}
\label{fig:combined}
\end{figure*}
\begin{figure*}[t]
\centering
\begin{subfigure}{\textwidth}
    \centering
    \includegraphics[width=0.9\textwidth]{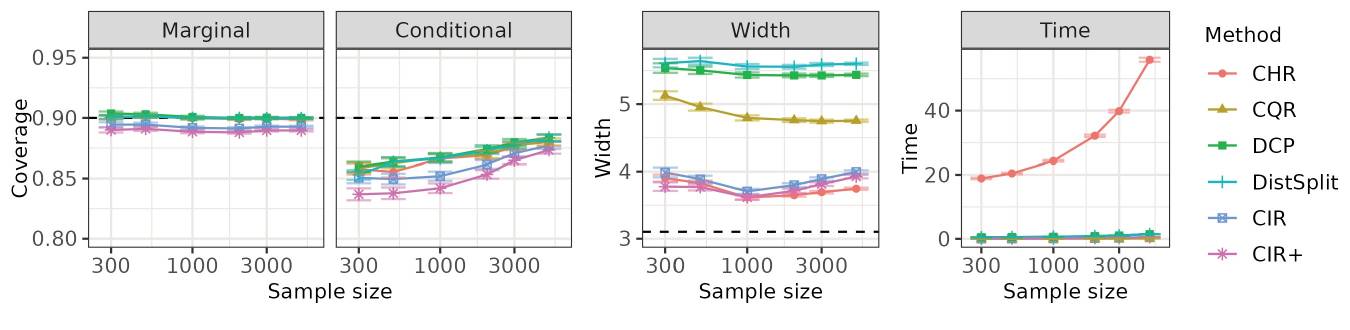}
\end{subfigure}
\begin{subfigure}{\textwidth}
    \centering
    \includegraphics[width=0.9\textwidth]{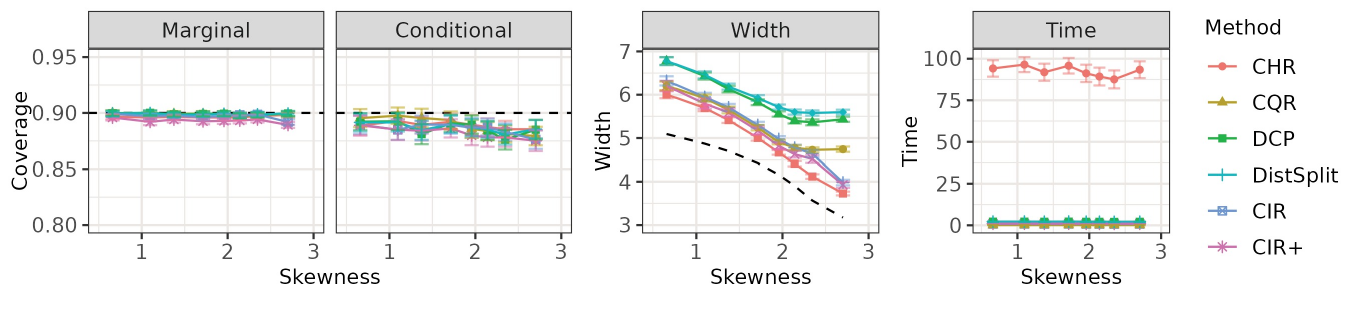}
\end{subfigure}

\caption{Performance comparison of our methods (CIR and CIR+) and benchmarks on synthetic data based on random forest. Upper: Effects of sample size. The dashed lines and curves correspond to an omniscient oracle. The vertical error bars span two standard errors from the mean. Lower: Effects of distribution skewness of the conditional distribution of the response, with a sample size of 5000. The maximum skewness (near 3) corresponds to data in the upper figure.}
\label{fig:combined_rf}
\end{figure*}

\begin{figure*}[t]
\centering
\includegraphics[width=0.95\textwidth]{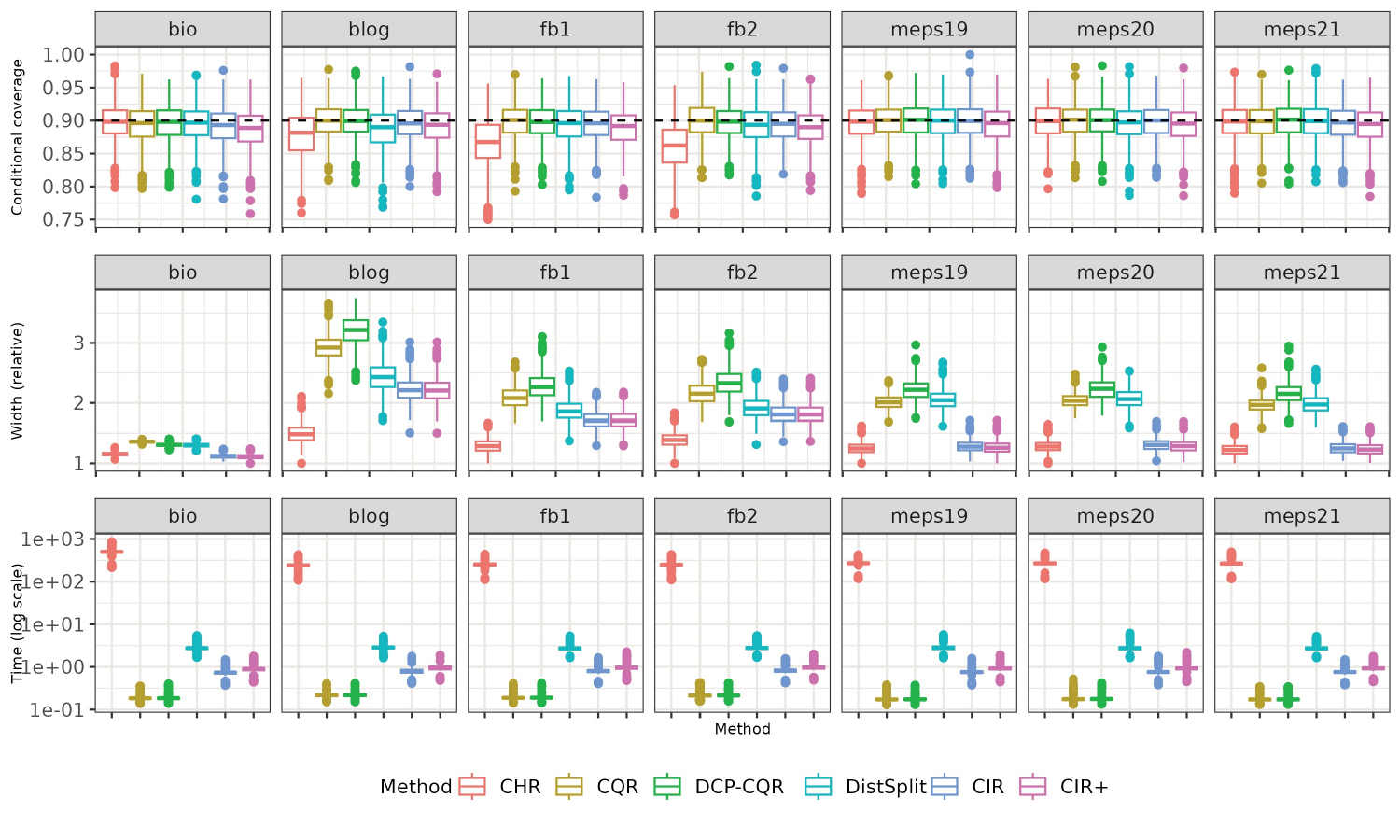} 
\caption{Performance of our methods compared to that of naive uncalibrated prediction intervals based on the  random forest regression model. Note that the top part of this plot shows marginal coverage.}
  \label{fig:results_real_rf_naive}
\end{figure*}

\begin{table*}[t]
\centering
\tiny

\begin{tabular}[t]{cccccccccc}
\toprule
\multicolumn{3}{c}{ } & \multicolumn{3}{c}{Neural Network} & \multicolumn{3}{c}{Random Forest} \\
\cmidrule(l{3pt}r{3pt}){3-6} \cmidrule(l{3pt}r{3pt}){7-10}
\multicolumn{2}{c}{ } & \multicolumn{2}{c}{Coverage} & \multicolumn{2}{c}{ } & \multicolumn{2}{c}{Coverage} & \multicolumn{1}{c}{ } \\
\cmidrule(l{3pt}r{3pt}){3-4} \cmidrule(l{3pt}r{3pt}){7-8}
Data & Method & Marginal & Condit. & Width & Time & Marginal & Condit. & Width & Time\\
\midrule
 & CHR & 0.90 (0.01) & 0.88 (0.03) & 13.1 (0.3) & 505.99 (97.50) & 0.90 (0.01) & 0.90 (0.03) & 10.7 (0.3) & 478.47 (106.51)\\

 & CQR & 0.90 (0.01) & 0.88 (0.03) & 14.5 (0.2) & 0.03 (0.00) & 0.90 (0.01) & 0.89 (0.03) & 12.6 (0.1) & 0.18 (0.03)\\

 & DCP & 0.90 (0.01) & 0.88 (0.03) & 14.6 (0.3) & 2.81 (0.42) & 0.90 (0.01) & 0.90 (0.03) & 11.9 (0.3) & 2.69 (0.47)\\

 & DCP-CQR & 0.90 (0.01) & 0.87 (0.03) & 14.7 (0.4) & 0.03 (0.00) & 0.90 (0.01) & 0.90 (0.03) & 12.1 (0.3) & 0.18 (0.03)\\

 & DistSplit & 0.90 (0.01) & 0.88 (0.03) & 14.7 (0.3) & 2.83 (0.43) & 0.90 (0.01) & 0.90 (0.03) & 12.0 (0.3) & 2.70 (0.48)\\

 & CIR & 0.90 (0.01) & 0.88 (0.03) & 12.7 (0.3) & 0.48 (0.09) & 0.90 (0.01) & 0.89 (0.03) & 10.3 (0.3) & 0.70 (0.14)\\

\multirow[t]{-7}{*}{\centering\arraybackslash bio} & CIR+ & 0.89 (0.01) & 0.87 (0.03) & 12.6 (0.4) & 0.65 (0.12) & 0.89 (0.01) & 0.89 (0.03) & 10.3 (0.4) & 0.85 (0.17)\\
\cmidrule{1-10}
 & CHR & 0.90 (0.01) & 0.88 (0.03) & 11.1 (1.1) & 229.03 (45.34) & 0.90 (0.01) & 0.88 (0.03) & 10.6 (1.1) & 231.77 (48.09)\\

 & CQR & 0.90 (0.01) & 0.87 (0.04) & 15.2 (1.5) & 0.03 (0.01) & 0.90 (0.01) & 0.90 (0.03) & 20.8 (1.5) & 0.21 (0.03)\\

 & DCP & 0.90 (0.01) & 0.88 (0.03) & 1422.3 (0.1) & 2.74 (0.47) & 0.90 (0.01) & 0.90 (0.03) & 1421.3 (0.1) & 2.76 (0.47)\\

 & DCP-CQR & 0.90 (0.01) & 0.87 (0.04) & 14.2 (1.5) & 0.04 (0.01) & 0.90 (0.01) & 0.90 (0.03) & 23.0 (2.0) & 0.21 (0.03)\\

 & DistSplit & 0.90 (0.01) & 0.87 (0.04) & 16.0 (1.6) & 2.74 (0.47) & 0.90 (0.01) & 0.89 (0.03) & 17.3 (1.7) & 2.78 (0.47)\\

 & CIR & 0.91 (0.01) & 0.90 (0.03) & 13.1 (1.3) & 0.36 (0.08) & 0.90 (0.01) & 0.90 (0.03) & 15.8 (1.4) & 0.77 (0.16)\\

\multirow[t]{-7}{*}{\centering\arraybackslash blog} & CIR+ & 0.90 (0.01) & 0.90 (0.03) & 12.9 (1.3) & 0.52 (0.11) & 0.89 (0.01) & 0.89 (0.03) & 15.7 (1.5) & 0.93 (0.19)\\
\cmidrule{1-10}
 & CHR & 0.90 (0.01) & 0.87 (0.04) & 10.6 (0.8) & 242.20 (45.94) & 0.90 (0.01) & 0.87 (0.04) & 11.5 (1.0) & 244.09 (52.70)\\

 & CQR & 0.90 (0.01) & 0.89 (0.03) & 14.6 (1.0) & 0.03 (0.00) & 0.90 (0.01) & 0.90 (0.03) & 18.7 (1.6) & 0.19 (0.03)\\

 & DCP & 0.90 (0.01) & 0.89 (0.03) & 1303.3 (0.1) & 2.77 (0.42) & 0.90 (0.01) & 0.90 (0.03) & 1302.5 (0.1) & 2.71 (0.49)\\

 & DCP-CQR & 0.90 (0.01) & 0.89 (0.03) & 13.3 (1.1) & 0.03 (0.00) & 0.90 (0.01) & 0.90 (0.03) & 20.3 (1.9) & 0.19 (0.03)\\

 & DistSplit & 0.90 (0.01) & 0.89 (0.03) & 14.4 (1.1) & 2.77 (0.42) & 0.90 (0.01) & 0.89 (0.03) & 16.7 (1.6) & 2.70 (0.49)\\

 & CIR & 0.90 (0.01) & 0.90 (0.02) & 12.9 (1.0) & 0.39 (0.08) & 0.90 (0.01) & 0.89 (0.03) & 15.3 (1.3) & 0.78 (0.16)\\

\multirow[t]{-7}{*}{\centering\arraybackslash fb1} & CIR+ & 0.90 (0.01) & 0.90 (0.03) & 12.9 (1.0) & 0.55 (0.11) & 0.89 (0.01) & 0.89 (0.03) & 15.4 (1.4) & 0.94 (0.20)\\
\cmidrule{1-10}
 & CHR & 0.90 (0.01) & 0.87 (0.03) & 11.0 (0.8) & 236.96 (43.77) & 0.90 (0.01) & 0.86 (0.04) & 11.1 (0.9) & 239.79 (51.39)\\

 & CQR & 0.90 (0.01) & 0.89 (0.03) & 14.2 (1.0) & 0.03 (0.00) & 0.90 (0.01) & 0.90 (0.03) & 17.1 (1.4) & 0.21 (0.03)\\

 & DCP & 0.90 (0.01) & 0.90 (0.03) & 1964.0 (0.1) & 2.76 (0.40) & 0.90 (0.01) & 0.90 (0.03) & 1963.3 (0.1) & 2.75 (0.47)\\

 & DCP-CQR & 0.90 (0.01) & 0.89 (0.03) & 12.8 (1.0) & 0.03 (0.01) & 0.90 (0.01) & 0.90 (0.03) & 18.6 (1.7) & 0.21 (0.03)\\

 & DistSplit & 0.90 (0.01) & 0.90 (0.03) & 14.2 (1.1) & 2.75 (0.39) & 0.90 (0.01) & 0.89 (0.03) & 15.2 (1.3) & 2.75 (0.46)\\

 & CIR & 0.90 (0.01) & 0.90 (0.03) & 12.5 (1.0) & 0.38 (0.07) & 0.90 (0.01) & 0.89 (0.03) & 14.4 (1.3) & 0.80 (0.15)\\

\multirow[t]{-7}{*}{\centering\arraybackslash fb2} & CIR+ & 0.90 (0.01) & 0.90 (0.03) & 12.5 (1.0) & 0.54 (0.10) & 0.89 (0.01) & 0.89 (0.03) & 14.4 (1.3) & 0.95 (0.19)\\
\cmidrule{1-10}
 & CHR & 0.90 (0.01) & 0.90 (0.03) & 20.3 (1.3) & 270.10 (51.03) & 0.90 (0.01) & 0.90 (0.03) & 19.4 (1.5) & 258.25 (54.26)\\

 & CQR & 0.90 (0.01) & 0.89 (0.03) & 29.0 (1.2) & 0.03 (0.00) & 0.90 (0.01) & 0.90 (0.03) & 31.2 (1.8) & 0.17 (0.03)\\

 & DCP & 0.90 (0.01) & 0.89 (0.03) & 559.3 (0.0) & 2.71 (0.48) & 0.90 (0.01) & 0.89 (0.03) & 559.0 (0.0) & 2.79 (0.54)\\

 & DCP-CQR & 0.90 (0.01) & 0.89 (0.03) & 33.0 (2.3) & 0.03 (0.00) & 0.90 (0.01) & 0.90 (0.03) & 34.3 (2.6) & 0.17 (0.03)\\

 & DistSplit & 0.90 (0.01) & 0.90 (0.03) & 30.2 (2.1) & 2.70 (0.47) & 0.90 (0.01) & 0.90 (0.03) & 31.8 (2.4) & 2.77 (0.52)\\

 & CIR & 0.90 (0.01) & 0.90 (0.03) & 21.2 (1.5) & 0.41 (0.09) & 0.90 (0.01) & 0.90 (0.03) & 19.9 (1.5) & 0.73 (0.16)\\

\multirow[t]{-7}{*}{\centering\arraybackslash meps19} & CIR+ & 0.90 (0.01) & 0.90 (0.03) & 21.0 (1.6) & 0.57 (0.12) & 0.90 (0.01) & 0.89 (0.03) & 19.6 (1.6) & 0.89 (0.19)\\
\cmidrule{1-10}
 & CHR & 0.90 (0.01) & 0.90 (0.03) & 19.2 (1.2) & 272.60 (58.48) & 0.90 (0.01) & 0.90 (0.03) & 18.6 (1.3) & 258.49 (57.63)\\

 & CQR & 0.90 (0.01) & 0.89 (0.03) & 28.1 (1.2) & 0.03 (0.00) & 0.90 (0.01) & 0.90 (0.03) & 29.6 (1.6) & 0.18 (0.03)\\

 & DCP & 0.90 (0.01) & 0.89 (0.03) & 520.3 (0.0) & 2.78 (0.47) & 0.90 (0.01) & 0.89 (0.03) & 520.1 (0.0) & 2.75 (0.59)\\

 & DCP-CQR & 0.90 (0.01) & 0.89 (0.03) & 32.4 (2.3) & 0.03 (0.00) & 0.90 (0.01) & 0.90 (0.02) & 32.3 (2.5) & 0.18 (0.03)\\

 & DistSplit & 0.90 (0.01) & 0.90 (0.03) & 28.7 (2.0) & 2.78 (0.45) & 0.90 (0.01) & 0.90 (0.03) & 30.0 (2.3) & 2.74 (0.60)\\

 & CIR & 0.91 (0.01) & 0.90 (0.03) & 20.2 (1.3) & 0.41 (0.09) & 0.90 (0.01) & 0.90 (0.03) & 18.9 (1.4) & 0.74 (0.18)\\

\multirow[t]{-7}{*}{\centering\arraybackslash meps20} & CIR+ & 0.90 (0.01) & 0.90 (0.02) & 19.9 (1.4) & 0.57 (0.12) & 0.90 (0.01) & 0.89 (0.03) & 18.7 (1.5) & 0.90 (0.21)\\
\cmidrule{1-10}
 & CHR & 0.90 (0.01) & 0.90 (0.03) & 20.4 (1.4) & 266.75 (50.50) & 0.90 (0.01) & 0.90 (0.03) & 20.0 (1.5) & 255.58 (56.01)\\

 & CQR & 0.90 (0.01) & 0.89 (0.03) & 30.3 (1.3) & 0.03 (0.00) & 0.90 (0.01) & 0.90 (0.03) & 32.1 (1.8) & 0.17 (0.03)\\

 & DCP & 0.90 (0.01) & 0.89 (0.03) & 531.3 (0.0) & 2.75 (0.43) & 0.90 (0.01) & 0.89 (0.03) & 531.0 (0.0) & 2.69 (0.46)\\

 & DCP-CQR & 0.90 (0.01) & 0.89 (0.03) & 34.3 (2.3) & 0.03 (0.00) & 0.90 (0.01) & 0.90 (0.03) & 35.1 (2.7) & 0.17 (0.03)\\

 & DistSplit & 0.90 (0.01) & 0.90 (0.03) & 30.4 (2.2) & 2.76 (0.43) & 0.90 (0.01) & 0.90 (0.03) & 32.3 (2.5) & 2.69 (0.47)\\

 & CIR & 0.91 (0.01) & 0.90 (0.03) & 21.5 (1.6) & 0.40 (0.08) & 0.90 (0.01) & 0.90 (0.03) & 20.4 (1.6) & 0.73 (0.14)\\

\multirow[t]{-7}{*}{\centering\arraybackslash meps21} & CIR+ & 0.90 (0.01) & 0.90 (0.03) & 21.2 (1.7) & 0.56 (0.11) & 0.89 (0.01) & 0.89 (0.03) & 20.1 (1.7) & 0.89 (0.18)\\
\bottomrule
\end{tabular}

\caption{Performance metrics for our proposed methods and established benchmarks across multiple real datasets, utilizing either a deep neural network or random forest as the base model. Values represent averages from 1000 random test sets, with standard deviations in parentheses.}
\label{tab:results_real}
\end{table*}
\end{document}